\def\eqref#1{(\ref{#1})}
\def\1{\bm{1}}
\def\rd{{\textnormal{d}}}
\def\vh{{\bm{h}}}
\def\vp{{\bm{p}}}
\def\vq{{\bm{q}}}
\def\vu{{\bm{u}}}
\def\vv{{\bm{v}}}
\def\vw{{\bm{w}}}
\DeclareMathAlphabet{\mathsfit}{\encodingdefault}{\sfdefault}{m}{sl}
\SetMathAlphabet{\mathsfit}{bold}{\encodingdefault}{\sfdefault}{bx}{n}
\newcommand{\R}{\mathbb{R}}
\newcommand{\PreserveBackslash}[1]{\let\temp=\\#1\let\\=\temp}
\newcolumntype{C}[1]{>{\PreserveBackslash\centering}p{#1}}
\newcolumntype{R}[1]{>{\PreserveBackslash\raggedleft}p{#1}}
\newcolumntype{L}[1]{>{\PreserveBackslash\raggedright}p{#1}}
\newcommand{\std}[1]{{\,\fontsize{6pt}{7pt}\selectfont{\scalebox{0.8}[1.0]{$\pm$}#1}}}
\newcommand{\zz}{{\textcolor{white}{0}}}
\newcommand{\dif}{{\mathrm{d}}}
\newcommand{\dt}{{\mathrm{d}t}}
\newcommand{\vV}{{\bm{V}}}
\newcommand{\vlambda}{{\bm{\lambda}}}
\newcommand{\M}{{\mathcal{M}}}
\newcommand{\TuM}{{\mathcal{T}_\vu\mathcal{M}}}
\newcommand{\TcuM}{{\mathcal{T}^*_\vu\mathcal{M}}}
\newcommand{\TvuM}{{\mathcal{T}_{(\vv,\vu)}\mathcal{M}}}
\newcommand{\TuuM}{{\mathcal{T}_{(\vu^{s+1},\vu^{s})}\mathcal{M}}}
\newcommand{\ddt}[1][]{\frac{\mathrm{d}{#1}}{\dt}}
\newcommand{\pderiv}[2]{{\frac{\partial #1}{\partial #2}}}
\newcommand{\bnabla}{{\overline{\nabla}}}
\newcommand{\bM}{{\overline{M}}}
\newcommand{\bY}{{\overline{Y}}}
\newcommand{\ul}[1]{{\underline{#1}}}
\newcommand{\ub}[1]{{\underline{\textbf{#1}}}}
\newenvironment{psmallmatrix}{\left(\begin{smallmatrix}}{\end{smallmatrix}\right)}
\newtheorem{thm}{Theorem}
\newtheorem{dfn}{Definition}
\title{FINDE: Neural Differential Equations for\\ Finding and Preserving Invariant Quantities}
\author{Takashi Matsubara\\
Osaka University\\
Toyonaka, Osaka, 560--8531 Japan\\
\texttt{matsubara@sys.es.osaka-u.ac.jp} \\
\And
Takaharu Yaguchi \\
Kobe University \\
Kobe, Hyogo, 657--8501 Japan\\
\texttt{yaguchi@pearl.kobe-u.ac.jp} \\
}
\begin{document}

\maketitle

\begin{abstract}
  Many real-world dynamical systems are associated with first integrals (a.k.a.~invariant quantities), which are quantities that remain unchanged over time.
  The discovery and understanding of first integrals are fundamental and important topics both in the natural sciences and in industrial applications.
  First integrals arise from the conservation laws of system energy, momentum, and mass, and from constraints on states; these are typically related to specific geometric structures of the governing equations.
  Existing neural networks designed to ensure such first integrals have shown excellent accuracy in modeling from data.
  However, these models incorporate the underlying structures, and in most situations where neural networks learn unknown systems, these structures are also unknown.
  This limitation needs to be overcome for scientific discovery and modeling of unknown systems.
  To this end, we propose \emph{first integral-preserving neural differential equation (FINDE)}.
  By leveraging the projection method and the discrete gradient method, FINDE finds and preserves first integrals from data, even in the absence of prior knowledge about underlying structures.
  Experimental results demonstrate that FINDE can predict future states of target systems much longer and find various quantities consistent with well-known first integrals in a unified manner.
\end{abstract}

\section{Introduction}
Modeling and predicting real-world systems are fundamental aspects of understanding the world in natural science and improving computer simulations in industry.
Target systems include chemical dynamics for discovering new drugs~\citep{Raff2012}, climate dynamics for climate change prediction and weather forecasting~\citep{Rasp2020,Trigo1999}, and physical dynamics of vehicles and robots for optimal control~\citep{Nelles2001}.
In addition to image processing and natural language processing~\citep{Devlin2018,He2015a}, neural networks have been actively studied for modeling dynamical systems~\citep{Nelles2001}.
Their history dates back to at least the 1990s (see \citet{Chen1990,Clouse1997,Levin1995,Narendra1990,Sjoberg1994,Wang1998} for examples).
Recently, two notable but distinct families have been proposed.
Physics-informed neural networks (PINNs) directly solve partial differential equations (PDEs) given as symbolic equations~\citep{Raissi2019}.
Neural ordinary differential equations (NODEs) learn ordinary differential equations (ODEs) from observed data and solve them using numerical integrators~\citep{Chen2018e}.
Our focus this time is on NODEs.

Most real-world systems are associated with \emph{first integrals} (a.k.a.~invariant quantities), which are quantities that remain unchanged over time~\citep{Hairer2006}.
First integrals arise from intrinsic geometric structures of systems and are sometimes more important than superficial dynamics in understanding systems (see Appendix~\ref{appendix:system} for details).
Many previous studies have extended NODEs by incorporating prior knowledge about first integrals and attempted to accurately learn a target system.
\citet{Greydanus2019} proposed the Hamiltonian neural network (HNN), which employs a neural network to approximate Hamilton's equation, thereby conserving the system energy called the Hamiltonian.
\citet{Finzi2020} proposed neural network architectures that conserve linear and angular momenta by utilizing the graph structure.
\citet{Finzi2020b} also extended an HNN to a system with holonomic constraints, which led to first integrals such as a pendulum length.
\citet{Matsubara2020} proposed a model that preserves the total mass of a discretized PDE.
These studies have demonstrated that the more prior knowledge a neural network has about first integrals, the more accurate their dynamics prediction.
See Table~\ref{tab:comparison} for comparisons.

Previous studies have mainly attempted to preserve known first integrals for better computer simulations.
However, in situations where a neural network learns a target system, it is naturally expected that first integrals associated with the target system are unknown, and it is not clear which of the above methods are available.
Therefore, this study proposes \emph{first integral-preserving neural differential equation} (FINDE) to find and preserve unknown first integrals from data in a unified manner.
FINDE has two versions for continuous and discrete time; these have the following advantages.

\vspace*{-0.5mm}\paragraph{Finding First Integrals}
Many studies have designed architectures or operations of neural networks to model continuous-time dynamics with known types of first integrals.
However, the underlying geometric structures of a target system are generally unknown in practice.
In contrast, FINDE finds various types of first integrals from data in a unified manner and preserves them in predictions.
For example, from an energy-dissipating system, FINDE can find first integrals other than energy.
FINDE can find not only known first integrals, but also unknown ones.
Hence, FINDE can lead to scientific discoveries.

\vspace*{-0.5mm}\paragraph{Combination with Known First Integrals}
FINDE can be combined with previously proposed neural networks designed to preserve known first integrals, such as HNNs.
In addition, when some first integrals are known in advance, they can also be incorporated into FINDE to avoid rediscovery.
Therefore, FINDE is available in various situations.

\vspace*{-0.5mm}\paragraph{Exact Preservation of First Integrals}
The first integral associated with a continuous-time system is destroyed after the dynamics is temporally discretized for computer simulations.
By leveraging the discrete gradient, the discrete-time version of FINDE preserves first integrals exactly (up to rounding errors) in discrete time and further improves the prediction performance.

\begin{table}[t]
  \scriptsize
  \caption{Comparison of Related Studies on Preservation of First Integrals.}
  \label{tab:comparison}
  \centering
  \begin{tabular}{lC{8mm}C{8mm}C{8mm}C{8mm}C{8mm}C{8mm}C{12mm}}
    \toprule
                                 & \rotatebox{12}{\ Energy} & \rotatebox{12}{\ Monentum} & \rotatebox{12}{\ Mass} & \rotatebox{12}{\ Constraint} & \rotatebox{12}{\ Learning invariants} & \rotatebox{12}{\ Exact conservation} & \\[-1mm]
    \midrule
    NODE~\citep{Chen2018e}       &                          &                            &                        &                              &                                       &                                        \\
    HNN~\citep{Greydanus2019}    & \checkmark               &                            &                        &                              &                                       &                                        \\
    LieConv~\citep{Finzi2020}    & \checkmark               & \checkmark                 &                        &                              &                                       &                                        \\
    DGNet~\citep{Matsubara2020}  & \checkmark               &                            & \checkmark             &                              &                                       & \checkmark                             \\
    CHNN~\citep{Finzi2020b}      & \checkmark               &                            &                        & \checkmark                   &                                       &                                        \\
    NPM~\citep{Yang2020a}        &                          &                            &                        & \checkmark                   & \checkmark                            & \checkmark                             \\
    \midrule
    Continuous FINDE  (proposed) & \checkmark               & \checkmark                 & \checkmark             & \checkmark                   & \checkmark                            &                                        \\
    Discrete FINDE   (proposed)  & \checkmark               & \checkmark                 & \checkmark             & \checkmark                   & \checkmark                            & \checkmark                             \\
    \bottomrule
  \end{tabular}
  \vspace*{-3mm}
\end{table}

\section{Background and Related Work}\label{sec:background}
\paragraph{First Integrals}\label{sec:background_first_integral}
Let us consider a time-invariant differential system $\ddt\vu=f(\vu)$ on an $N$-dimensional manifold $\M$, where $\vu$ denotes the system state and $f:\M\rightarrow\TuM$ represents a vector field on $\M$.
For simplicity, we suppose the manifold $\M$ to be a Euclidean space $\R^N$.
\begin{dfn}[first integral]
  A quantity $V:\M\rightarrow\R$ is referred to as a first integral of a system $\ddt\vu=f(\vu)$ if it remains constant along with any solution $\vu(t)$, i.e., $\ddt V(\vu)=0$.
\end{dfn}
If a differential system $\ddt\vu=f(\vu)$ has $K$ functionally independent first integrals $V_1,\dots,V_K$, the solution $\vu(t)$ given an initial value $\vu_0$ stays at the $(N-K)$-dimensional submanifold
\begin{equation}
  \M'=\{\vu\in\M:V_1(\vu)=V_1(\vu_0),\dots,V_K(\vu)=V_K(\vu_0)\}. \label{eq:submanifold}
\end{equation}
The tangent space $\TuM'\subset\TuM$ of the submanifold $\M'\subset\M$ at a point $\vu$ is the orthogonal complement to the space spanned by the gradients $\nabla V_k(\vu)$ of the first integrals $V_k$ for $k=1,\dots,K$;
\begin{equation}
  \TuM'=\{\vw\in\TuM: \nabla V_k(\vu)^\top\vw=0 \mbox{ for }k=1,\dots,K\}.
\end{equation}
Conversely, if the time-derivative $f$ at point $\vu$ is on the tangent space $\TuM'$ for certain functions $V_k$'s, the quantities $V_k$'s are first integrals of the system $\ddt\vu=f(\vu)$; it holds that $\ddt V_k(\vu)=\nabla V_k(\vu)^\top\ddt\vu=\nabla V_k(\vu)^\top f(\vu)=0.$

One of the most well-known first integrals is the Hamiltonian $H$, which represents the system energy of a Hamiltonian system.
Noether's theorem states that a continuous symmetry of a system leads to a conservation law (and hence a first integral)~\citep{Hairer2006}.
A Hamiltonian system is symmetric to translation in time, and the corresponding first integral is the Hamiltonian.
Symmetries to translation and rotation in space lead to the conservation of linear and angular momenta.
However, not all first integrals are related to symmetries.
A pendulum can be expressed in Cartesian coordinates, and then the rod length constrains the mass position.
This type of constraint is called a holonomic constraint and leads to first integrals.
Models of disease spreads and chemical reactions have the total mass (population) as the first integral.
Also for a system described by a PDE, the total mass is sometimes a first integral~\citep{Furihata2010}.
See Appendix \ref{appendix:system} for the classes of dynamics, their geometric structures, and related studies to find or preserve first integrals.

\paragraph{First Integrals in Numerical Analysis}
For computer simulations, differential systems are discretized in time and solved by numerical integration, causing numerical errors (which is composed of temporal discretization errors and rounding errors).
Moreover, the geometric structures of the system are often destroyed, and the corresponding first integrals are no longer preserved.
A common remedy is a symplectic integrator, which preserves the symplectic structure and accurately integrates Hamiltonian systems~\citep{Hairer2006}.
However, the Ge--Marsden theorem states that a symplectic integrator only approximately conserves the Hamiltonian~\citep{Zhong1988}.
Hence, many numerical schemes have also been investigated to preserve first integrals exactly, while these schemes cannot preserve the symplectic structure.
Some examples are shown below.

Let the superscript $s\in\{0,1,\dots,S\}$ denote the state $\vu^s$ or time $t^s$ at the $s$-th time step, and $\Delta t^s=t^{s+1}-t^{s}$ denote a time-step size.
A projection method uses a numerical integrator to predict the next state $\tilde\vu^{s+1}$ from the current state $\vu^s$ and then projects the state $\tilde\vu^{s+1}$ onto the submanifold $\M'$~\citep[Section IV.4]{Gear1986,Hairer2006}.
The projected state $\vu^{s+1}$ preserves the first integrals $V_k$.
In particular, the projected state $\vu^{s+1}$ is obtained by solving the optimization problem
\begin{equation}
  \arg\min_{\vu^{s+1}} \|\vu^{s+1}-\tilde\vu^{s+1}\| \mbox{ subject to } V_k(\vu^{s+1})-V_k(\vu^s)=0 \mbox{ for }k=1,\dots,K.\label{eq:projection_method}
\end{equation}
The local coordinate method defines a coordinate system on the neighborhood of the current state $\vu^s$ and integrates a differential equation on it~\citep[Section IV.5]{Potra1991,Hairer2006}.
The discrete gradient method defines a discrete analogue to a differential system and integrates it in discrete time, thereby preserving the Hamiltonian exactly (up to rounding errors) in discrete time~\citep{Furihata2010,Gonzalez1996,Hong2011}.

\paragraph{Neural Networks to Preserve First Integrals}
NODE defines the right-hand side $f$ of a differential system $\ddt\vu=f(\vu)$ using a neural network in the most general way with no associated first integrals~\citep{Chen2018e}.
NODE is a universal approximator to ODEs and can approximate any ODE with arbitrary accuracy if there is an infinite amount of training data~\citep{Teshima2020a}.
In practice, the amount of training data is limited, and prior knowledge about the target system is helpful for learning (see~\citet{Sannai2021} for the case with convolutional neural networks (CNNs)).
HNN~\citep{Greydanus2019} assumes the target system to be a Hamiltonian system in the canonical form, thereby guaranteeing various properties of Hamiltonian systems by definition, including the conservation of energy and preservation of the symplectic structure in continuous time~\citep{Hairer2006}.
Some studies have employed a symplectic integrator for HNN to preserve the energy and symplectic structure with smaller numerical errors~\citep{Chen2020a}.
LieConv and EMLP-HNN employ neural network architectures with translational and rotational symmetries to preserve momenta~\citep{Finzi2020,Finzi2021}.
CHNN incorporates a known holonomic constraint in the dynamics~\citep{Finzi2020b}.
Deep conservation extracts latent dynamics of a PDE system and preserves a quantity of interest by forcing its flux to be zero~\citep{Lee2019a}.
HNN++ also guarantees the conservation of mass in PDE systems by using a coefficient matrix derived from differential operators~\citep{Matsubara2020}.
These methods preserve known types of first integrals and suffer from temporal discretization errors.
In contrast, FINDE learns any types of first integrals from data and preserves them even after temporal discretization.

The neural projection method (NPM) learns fixed holonomic constraints using the projection (and inequality constraints)~\citep{Yang2020a}.
DGNet employed discrete gradient methods to guarantee the energy conservation in Hamiltonian systems (and the energy dissipation in friction systems)~\citep{Matsubara2020}.
While these methods preserve the aforementioned first integrals exactly in discrete time, their formulations are not available for other first integrals.

Several studies have proposed neural networks to learn Lyapunov functions, which are expected to be non-increasing over time, in contrast to first integrals~\citep{Manek2019,Takeishi2020}.
If the state moves in the direction of increasing the function, it is projected onto or moved inside the contour line of the Lyapunov function.
This concept is similar to that of the continuous-time version of FINDE but focuses on a single non-increasing quantity in continuous time; FINDE preserves multiple quantities in both continuous and discrete time.

\section{First Integral-Preserving Neural Differential Equation}\label{sec:method}
We suppose that a target system has at least $K$ unknown functionally independent first integrals.
When a neural network learns the dynamics of the target system, it is not guaranteed to learn these first integrals.
We suppose that a certain neural network $\hat f$ for modeling the target dynamics is given, and in addition to this model $\hat f$, we introduce a neural network that outputs a $K$-dimensional vector $\vV(\vu)=(V_1(\vu)\ V_2(\vu)\ \dots\ V_K(\vu))^\top$.
Each element is expected to learn one of the first integrals as $V_k:\R^N\rightarrow\R$ for $k=1,\dots,K$.
Then, the submanifold $\M'$ is defined as in Eq.~\eqref{eq:submanifold}.

\subsection{Continuous FINDE: Time-Derivative Projection Method}\label{sec:cFINDE}
We propose a time-derivative projection method called \emph{continuous FINDE (cFINDE)}.
The cFINDE projects the time-derivative onto the tangent space $\TuM'$.
Roughly speaking, the cFINDE projects the dynamics on the space of the directions in which the first integrals do not change.
In this way, the method can learn dynamics while preserving first integrals $V$, thereby finding unknown first integrals from data.

We refer to the neural network that defines the time-derivative $\hat f:\R^N\rightarrow\R^N$ as the base model.
Applying the method of Lagrange multipliers to the projection method in Eq.~\eqref{eq:projection_method}, and taking the limit as the time-step size approaches zero, we have
\begin{equation}
  \textstyle \ddt\vu=f(\vu), \
  f(\vu)=\hat f(\vu)-M(\vu)^\top\vlambda(\vu), \ \ddt \vV(\vu) =\mathbf{0}, \label{eq:cFINDE_base}
\end{equation}
where $M=\pderiv{\vV}{\vu}$ and $\vlambda\in\R^N$ is the Lagrange multiplier (see Appendix \ref{appendix:derivation} for detailed derivation).
We transform the second equation to obtain
\begin{equation}
  \mathbf{0}=\textstyle \ddt \vV(\vu(t))=\pderiv{\vV}{\vu} \ddt\vu=M(\vu) f(\vu)=M(\vu) (\hat f(\vu)-M(\vu)^\top\vlambda(\vu)),
  \label{eq:change_in_V}
\end{equation}
from which we obtain the Lagrange multiplier $\vlambda(\vu)=(M(\vu) M(\vu)^\top)^{-1}M(\vu)\hat f(\vu)$.
By eliminating $\vlambda(\vu)$, we define the cFINDE as
\begin{equation}
  \textstyle \ddt\vu=f(\vu)=(I-Y(\vu))\hat f(\vu) \text{ for } Y(\vu)=M(\vu)^\top(M(\vu) M(\vu)^\top)^{-1}M(\vu). \label{eq:cFINDE_equation}
\end{equation}
\begin{thm}[continuous-time first integral preservation]\label{thm:cFINDE}
  The cFINDE $\ddt\vu=f(\vu)$ preserves all first integrals $V_k$ for $k=1,\dots,K$ in continuous time, that is, $\ddt V_k=0$.
\end{thm}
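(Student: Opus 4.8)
The plan is to verify the conservation law by a direct differentiation along trajectories, exploiting the fact that $Y(\vu)$ is an orthogonal projector whose product with $M(\vu)$ collapses under a single algebraic identity. The whole statement reduces to showing $\ddt\vV(\vu)=\mathbf{0}$, since this vector equation is precisely the collection of the scalar claims $\ddt V_k=0$ for $k=1,\dots,K$.

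First I would apply the chain rule along a solution of the cFINDE dynamics, writing $\ddt\vV(\vu)=\pderiv{\vV}{\vu}\,\ddt\vu = M(\vu)\,f(\vu)$, exactly as recorded in Eq.~\eqref{eq:change_in_V}; here $M=\pderiv{\vV}{\vu}$ is the $K\times N$ Jacobian whose $k$-th row is $\nabla V_k(\vu)^\top$. Substituting the definition $f(\vu)=(I-Y(\vu))\hat f(\vu)$ from Eq.~\eqref{eq:cFINDE_equation} turns this into $M(\vu)(I-Y(\vu))\hat f(\vu) = M(\vu)\hat f(\vu)-M(\vu)Y(\vu)\hat f(\vu)$.

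The key step is the identity $M(\vu)Y(\vu)=M(\vu)$. Multiplying $Y=M^\top(MM^\top)^{-1}M$ on the left by $M$ gives $MY = (MM^\top)(MM^\top)^{-1}M = M$, so the two terms cancel and $\ddt\vV(\vu)=\mathbf{0}$. Equivalently, $I-Y(\vu)$ is the orthogonal projection onto $\ker M(\vu)=\TuM'$, so that $f(\vu)\in\TuM'$ at every point and the time-derivative never leaves the tangent space to the level set; by the converse characterization recorded just after Eq.~\eqref{eq:submanifold}, this is exactly the statement that each $V_k$ is a first integral. I would present both the one-line matrix cancellation and this geometric reading, since they reinforce each other.

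The only point demanding care — and the nominal obstacle — is that this identity presupposes that $(M(\vu)M(\vu)^\top)^{-1}$ exists, i.e.\ that $M(\vu)$ has full row rank $K$. This is guaranteed precisely by the standing assumption that the $V_k$ are functionally independent (with $K\le N$), which makes the gradients $\nabla V_k(\vu)$ linearly independent and hence $MM^\top$ invertible. I would state this hypothesis explicitly and note that $Y$ is simply undefined where it fails; everything else is the routine computation above.
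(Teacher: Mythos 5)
Your proof is correct and follows essentially the same route as the paper's: chain rule to get $\ddt\vV = M(\vu)f(\vu)$, then the cancellation $M(I - M^\top(MM^\top)^{-1}M) = M - M = 0$. Your explicit remark that invertibility of $MM^\top$ rests on the functional independence of the $V_k$ is a worthwhile point of care that the paper's proof leaves implicit, but it does not change the argument.
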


See Appendix \ref{appendix:derivation} for proof.
The base model $\hat f$ can be a NODE, an HNN, or any other model depending on the available prior knowledge.
Additionally, if a first integral is already known, it can be directly used as one of the first integrals $V_k$ instead of being found by the neural network.
Note that even though the base model $\hat f$ is an HNN, due to the projection, the cFINDE $f$ is no longer a Hamiltonian system in the strict sense.

Compared to the base model $\hat f$, the cFINDE requires the additional computation of the neural network $\vV$, several matrix multiplications, and an inverse operation.
The inverse operation has a computational cost of $O(K^3)$, which is not costly if the number $K$ of first integrals is small.
Many previous models also need the inverse operation to satisfy the constraints and geometric structures, such as Lagrangian neural network (LNN)~\citep{Cranmer2020}, neural symplectic form~\citep{Chen2021NeurIPS}, and CHNN~\citep{Finzi2020b}.

\subsection{Discrete FINDE: Discrete-Time Derivative Projection Method}\label{sec:dFINDE}
The cFINDE is still an ODE and hence needs to be solved using a numerical integrator, which causes the temporal discretization errors in the first integrals.
In order to eliminate these errors, it is necessary to constrain the destination (i.e., finite difference) rather than the direction (i.e., time-derivative).
For this purpose, we propose \emph{discrete FINDE (dFINDE)} by employing \emph{discrete gradients} to define \emph{discrete tangent spaces}, which are needed to constraint the state variables on the submanifold $\M'$.

A discrete gradient $\bnabla V$ is a discrete analogue to a gradient $\nabla V$~\citep{Furihata2010,Gonzalez1996,Hong2011}.
Recall that a gradient $\nabla V$ of a function $V:\R^N\rightarrow\R$ can be regarded as a function $\R^N\rightarrow\R^N$ that satisfies the chain rule $\ddt V(\vu)=\nabla V(\vu)^\top\ddt\vu$.
Analogously, a discrete gradient $\bnabla$ is defined as follows:
\begin{dfn}[discrete gradient]\label{def:discrete_gradient}
  A discrete gradient $\bnabla V$ of a function $V:\R^N\rightarrow\R$ is a function $\R^N\times\R^N\rightarrow\R^N$ that satisfies
  $V(\vv)-V(\vu)=\bnabla V(\vv,\vu)^{\top}(\vv-\vu)$ and $\bnabla V(\vu,\vu)=\nabla V(\vu)$.
\end{dfn}
The first condition is a discrete analogue to the chain rule when replacing the time-derivatives $\ddt V$ and $\ddt\vu$ with finite differences $(V(\vv)-V(\vu))$ and $(\vv-\vu)$, respectively, and the second condition ensures consistency with the ordinary gradient $\nabla V$.
A discrete gradient $\bnabla V$ is not uniquely determined and has been obtained manually.
Recently, the automatic discrete differentiation algorithm (ADDA) has been proposed by \citet{Matsubara2020}, which obtains a discrete gradient of a neural network in a manner similar to the automatic differentiation algorithm~\citep{tensorflow,Paszke2017}.
The discrete gradient is defined in discrete time; hence, the prediction using the discrete gradient is free from temporal discretization errors.
See Appendix \ref{appendix:discrete_gradient} and the references~\citet{Furihata2010,Matsubara2020} for more details.

Following~\citet{Christiansen2011,Dahlby2011}, we introduce a discrete analogue to the tangent space $\TuM'$ called the discrete tangent space $\TvuM'$.
In particular, for a pair of points $(\vv,\vu)\in\M'$, the discrete tangent space is defined as
\begin{equation}
  \TvuM'=\{\vw\in\R^N: \bnabla V_k(\vv,\vu)^{\top}\vw=0 \mbox{ for } k=1,\dots,K\}.
\end{equation}
If the finite difference $(\vu^{s+1}-\vu^{s})$ between the predicted and current states is on the discrete tangent space $\TuuM'$, the first integrals $V_k$ are preserved because $V_k(\vu^{s+1})-V_k(\vu^{s})=\bnabla V_k(\vu^{s+1},\vu^{s})^\top(\vu^{s+1}-\vu^{s})=0$.
Note that similar concepts defined in different ways are also referred to as discrete tangent spaces~\citep{Cuell2009,Dehmamy2021}.

We suppose that a neural network (e.g., NODE) $\hat f$ defines an ODE and a numerical integrator predicts the next state $\tilde\vu^{s+1}$ from a given state $\vu^{s}$.
We call this process a discrete-time base model $\hat\psi$, which satisfies $\frac{\tilde \vu^{s+1}-\vu^{s}}{\Delta t^s}=\hat\psi(\vu^{s};\Delta t^s)$.
Subsequently, we consider the model
\begin{equation}
  \begin{aligned}
     & \textstyle \frac{\vu^{s+1}-\vu^{s}}{\Delta t^s}=\psi(\vu^{s+1},\vu^{s};\Delta t^s),                                                                                \\
     & \psi(\vu^{s+1},\vu^{s};\Delta t^s)=\hat\psi(\vu^{s};\Delta t^s)-\bM(\vu^{s+1},\vu^{s})^{\top}\vlambda(\vu^{s+1},\vu^{s}),\ \vV(\vu^{s+1})-\vV(\vu^{s})=\mathbf{0},
  \end{aligned}\label{eq:psi}
\end{equation}
where $\bM(\vu^{s+1},\vu^{s})=(\bnabla V_1(\vu^{s+1},\vu^{s})\ \dots\ \bnabla V_K(\vu^{s+1},\vu^{s}))^\top$.
As shown in Appendix \ref{appendix:derivation}, this formulation is also derived from the projection method in Eq.~\eqref{eq:projection_method}.
Using the chain rule of the discrete gradient,
\begin{equation}
  \textstyle \mathbf{0}
  =\frac{\vV(\vu^{s+1})-\vV(\vu^{s})}{\Delta t^s}
  =\bM(\vu^{s+1},\vu^{s})\frac{\vu^{s+1}-\vu^{s}}{\Delta t^s}
  =\bM(\vu^{s+1},\vu^{s})\psi(\vu^{s+1},\vu^{s};\Delta t^s),
\end{equation}
Substituting this into Eq.~\eqref{eq:psi} and eliminating the Lagrange multiplier $\vlambda$, we define the dFINDE as
\begin{equation}
  \textstyle \textstyle \frac{\vu^{s+1}-\vu^{s}}{\Delta t^s}=\psi(\vu^{s+1},\vu^{s};\Delta t^s)=(I-\bY(\vu^{s+1}\!\!,\vu^{s}))\hat\psi(\vu^{s};\Delta t^s)
  \text{ for } \bY=\bM^\top(\bM\ \bM^\top)^{-1}\bM,
  \label{eq:update_dFINDE}
\end{equation}
where we have abbreviated $\bM(\vu^{s+1},\vu^{s})$ and $\bY(\vu^{s+1},\vu^{s})$ to $\bM$ and $\bY$, respectively.

\begin{thm}[discrete-time first integral preservation]\label{thm:dFINDE}
  The dFINDE $\frac{\vu^{s+1}-\vu^{s}}{\Delta t^s}=\psi(\vu^{s+1}\!\!,\vu^{s};\Delta t^s)$ preserves all first integrals $V_k$ for $k=1,\dots,K$ in discrete time, that is, $V_k(\vu^{s+1})-V_k(\vu^{s})=0$.
\end{thm}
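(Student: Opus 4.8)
The plan is to mirror the continuous-time argument of Theorem~\ref{thm:cFINDE}, replacing the ordinary gradient and time-derivative by their discrete analogues, so that the key cancellation holds \emph{exactly} rather than up to discretization error. Throughout I would assume that $\vu^{s+1}$ is a solution of the implicit update in Eq.~\eqref{eq:update_dFINDE} (the scheme is implicit because $\bM$ and $\bY$ depend on $\vu^{s+1}$); the theorem asserts that any such solution conserves each $V_k$, and I verify this directly rather than arguing about existence.

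First I would invoke the defining property of the discrete gradient (Definition~\ref{def:discrete_gradient}) for each $k$, namely $V_k(\vu^{s+1})-V_k(\vu^{s})=\bnabla V_k(\vu^{s+1},\vu^{s})^\top(\vu^{s+1}-\vu^{s})$, and stack these $K$ scalar identities into the vector form
\begin{equation}
  \vV(\vu^{s+1})-\vV(\vu^{s})=\bM(\vu^{s+1},\vu^{s})\,(\vu^{s+1}-\vu^{s}),
\end{equation}
using the definition $\bM=(\bnabla V_1\ \dots\ \bnabla V_K)^\top$. This step is exact and needs no Taylor expansion, which is precisely the advantage of the discrete gradient over $\nabla$.

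Next I would substitute the update rule $\vu^{s+1}-\vu^{s}=\Delta t^s\,(I-\bY)\hat\psi$ into this identity, obtaining $\vV(\vu^{s+1})-\vV(\vu^{s})=\Delta t^s\,\bM(I-\bY)\hat\psi$. It then remains to show $\bM(I-\bY)=\mathbf{0}$. Expanding $\bY=\bM^\top(\bM\bM^\top)^{-1}\bM$ gives $\bM\bY=(\bM\bM^\top)(\bM\bM^\top)^{-1}\bM=\bM$, whence $\bM(I-\bY)=\bM-\bM=\mathbf{0}$ and therefore $\vV(\vu^{s+1})-\vV(\vu^{s})=\mathbf{0}$, which is the claim. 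Note that this holds for \emph{any} base prediction $\hat\psi$, exactly as in the continuous counterpart Eq.~\eqref{eq:change_in_V}.

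The only genuine subtlety is the invertibility of $\bM\bM^\top$ used in defining $\bY$: this requires the $K$ discrete gradients $\bnabla V_k(\vu^{s+1},\vu^{s})$ to be linearly independent at the pair $(\vu^{s+1},\vu^{s})$, the discrete analogue of the functional independence of the $V_k$ assumed at the outset. Under that rank condition the algebra above is purely formal, so I do not expect a hard step; rather, the point worth emphasizing is that the discrete gradient's exact chain rule is what forces the cancellation to hold identically in discrete time, with no temporal discretization error surviving, in contrast to any scheme built from the ordinary gradient.
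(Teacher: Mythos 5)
Your proposal is correct and follows essentially the same route as the paper's proof in Appendix B.1: apply the exact discrete chain rule $\vV(\vu^{s+1})-\vV(\vu^{s})=\bM(\vu^{s+1}-\vu^{s})$, substitute the update rule, and use $\bM\bM^\top(\bM\bM^\top)^{-1}\bM=\bM$ to force the cancellation. Your added remark about the rank condition on $\bM\bM^\top$ is a reasonable observation the paper only addresses informally elsewhere (in its discussion of division-by-zero failures), but it does not change the argument.
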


See Appendix \ref{appendix:derivation} for proof.
Intuitively, dFINDE projects the finite difference (discrete-time derivative) $\hat\psi$ onto the discrete tangent space $\TuuM'$ after the numerical integration for each step, whereas cFINDE projects the time-derivative $\hat f$ onto the tangent space $\TuM'$ at every substep inside a numerical integrator.
In the discrete-time base model $\hat\psi$, the ODE $\hat f$ can be defined by any model, such as NODE or HNN, and the numerical integrator can be implemented by any method, such as the Runge--Kutta method or the leapfrog integrator.
The projection method in Eq.~\eqref{eq:projection_method}, the method in Eq.~\eqref{eq:psi}, and the dFINDE in Eq~\eqref{eq:update_dFINDE} are implicit methods and hence relatively computationally expensive.
However, only the dFINDE can be trained non-iteratively by standard backpropagation algorithms.
As explained in Appendix \ref{appendix:training_objective}, this is because the next state $\vu^{s+1}$ is given during training and the ADDA can explicitly obtain the discrete gradient and its computational graph.

\begin{table}[t]
  \fontsize{8pt}{9pt}\selectfont\centering
  \caption{Datasets, Dynamics, and First Integrals.}
  \label{tab:datasets}
  \begin{tabular}{llccccc}
    \toprule
                               &                               &              & \multicolumn{4}{c}{\textbf{First Integrals}}                                                                                   \\
    \cmidrule(lr){4-7}
    \textbf{Dataset}           & \textbf{Dynamics (Structure)} & \textbf{$N$} & \!\!\textbf{Energy}\!\!                      & \!\!\textbf{Momentum}\!\! & \!\!\textbf{Mass}\!\! & \!\!\textbf{Constraint}\!\! \\
    \midrule
    Two-body problem           & Canonical Hamiltonian         & 8            & \checkmark                                   & \checkmark                                                                      \\
    Discretized KdV equation   & Non-canonical Hamiltonian     & 50           & \checkmark                                   &                           & \checkmark                                          \\
    Double pendulum            & Poisson                       & 8            & \checkmark                                   &                           &                       & \checkmark                  \\
    FitzHugh--Nagumo model\!\! & Dirac                         & 4            &                                              &                           &                       & \checkmark                  \\
    \bottomrule
  \end{tabular}
  \vspace*{-3mm}
\end{table}

\section{Experiments}
\subsection{Experimental Settings}\label{sec:setting}
\paragraph{Target Systems}
We evaluated FINDE and base models using datasets associated with first integrals; these are summarized in Table~\ref{tab:datasets}.
A gravitational two-body problem (2-body) on a 2-dimensional configuration space is a typical Hamiltonian system in the canonical form.
In addition to the total energy, the system has first integrals related to symmetries in space, namely, the linear and angular momenta.
The Korteweg--De Vries (KdV) equation is a PDE model of shallow water waves.
This equation is a Hamiltonian system in a non-canonical form and has the Hamiltonian, total mass, and many other quantities as first integrals.
We discretized the KdV equation in space, obtaining a fifty-dimensional state $\vu$.
A double pendulum (2-pend) is a Hamiltonian system in polar coordinates.
However, we transformed it to Cartesian coordinates; hence, it became a Poisson system.
The lengths of the two rods work as holonomic constraints and lead to four first integrals in addition to the Hamiltonian.
The FitzHugh--Nagumo model is a biological neuron model as an electric circuit, which exhibits a rapid and transient change of voltage called a spike.
As an electric circuit, the currents through and voltages applied to the inductor and capacitor can be regarded as system states, which are constrained by the circuit topology and Kirchhoff's current and voltage laws.
Then, this system has a state of four elements and two first integrals.
Because the resistor dissipates the energy, the system is not a Poisson system, but a Dirac structure can be found~\citep{VanderSchaft2014}.
We generated a time-series set of each dataset with different initial conditions (hence, different values of first integrals).
See Appendix \ref{appendix:datasets} for more details.

\paragraph{Implementation}
We implemented the proposed FINDE and evaluated it under the following settings.
We implemented all codes by modifying the officially released codes of HNN~\citep{Greydanus2019} \footnote{\url{https://github.com/greydanus/hamiltonian-nn}} and DGNet~\citep{Matsubara2020}\footnote{\url{https://github.com/tksmatsubara/discrete-autograd}\label{footnote:dgnet}}.
We used Python v.~3.8.12 with packages scipy v.~1.7.3, pytorch v.~1.10.2, torchdiffeq v.~0.1.1, functorch v.~1.10 preview, and gplearn v.~0.4.2.
We used the Dormand--Prince method (dopri5)~\citep{Dormand1986} as the numerical integrator, except in Section~\ref{sec:first_integral_preservation}.
All experiments were performed on a single NVIDIA A100.

Following HNN~\citep{Greydanus2019} and DGNet~\citep{Matsubara2020}, we used fully-connected neural networks with two hidden layers.
The input was the state $\vu$, and the output represented the first integrals $\vV$ for FINDE, time-derivative $\hat f$ for NODE, or the Hamiltonian $H$ for HNN.
Each hidden layer had 200 units and preceded a hyperbolic tangent activation function.
Each weight matrix was initialized as an orthogonal matrix.
For the KdV dataset, we used a 1-dimensional CNN, wherein the kernel size of each layer was 3.
The double pendulum is a second--order system, implying that the time-derivative $\ddt\vq$ of the position $\vq$ is known to be the velocity $\vv$.
Hence, we treated only the acceleration $\ddt\vv$ as the output to learn in the 2-pend dataset.
This assumption slightly improved the absolute performances but did not change the relative trends.

As the loss function for the cFINDE, we used the mean squared error (MSE) between the ground truth future state $\vu^{s+1}_\textrm{GT}$ and the future state $\vu^{s+1}_\textrm{pred.}$ predicted from the current step $\vu^{s}_\textrm{GT}$ normalized by the time-step size $\Delta t^s$; we named this the \emph{1-step error}.
For the dFINDE, we used the MSE between the left- and right-hand sides of Eq.~\eqref{eq:update_dFINDE} because the ground truth states $\vu^{s}_\textrm{GT}$ and $\vu^{s+1}_\textrm{GT}$ are available during the training phase.
The base model and FINDE were jointly trained using the Adam optimizer~\citep{Kingma2014b} with the parameters $(\beta_1,\beta_2)=(0.9,0.999)$ and a batch size of 200.
The learning rate was initialized to $10^{-3}$ and decayed to zero with cosine annealing~\citep{Loshchilov2017}.
See Appendix \ref{appendix:training_objective} and the enclosed source code for details about implementations.

\paragraph{Evaluation Metric}
We used the 1-step error as an evaluation metric, which is identical to the loss function for the cFINDE, and displayed it in the scale $\times 10^{-9}$.
The lower this indicator, the better, as indicated by $\downarrow$.
The MSEs of the state or system energy over a long period are misleading indicators, as suggested in prior studies~\citep{Botev2021,Jin2020,Vlachas2020}.
For example, a periodic orbit that is correctly learned except for a slight difference in angular velocity would have the same MSE as an orbit that never moves from its initial position.
Instead, we used the valid prediction time (\emph{VPT})~\citep{Botev2021,Jin2020,Vlachas2020}.
VPT denotes the time point $s$ divided by the length $S$ of the time-series at which the MSE of the predicted state $\vu^{s}_\textrm{pred.}$ first exceeds a given threshold $\theta$ in an initial value problem, that is,
\begin{equation}
  \textstyle VPT(\vu_\textrm{pred.};\vu_\textrm{GT})=\frac{1}{S} \max \{s_f | \mathrm{MSE}(\vu^s_\textrm{pred.},\vu^s_\textrm{GT})<\theta\mbox{ for all }s\le s_f,\ 0\le s_f\le S\}.
\end{equation}
The higher this indicator, the better, as indicated by $\uparrow$.
To obtain VPTs, we normalized each element of state to have the zero mean and unit variance in the training data and set $\theta$ to 0.01.
For systems with ``spiking'' behaviors, a small error in phase may be regarded as a significant error in the state; for the FitzHugh--Nagumo model, we obtained the VPTs by allowing for a delay and advance of up to 5 steps.

\begin{wrapfigure}{r}{1.7in}
  \vspace*{-18mm}
  \includegraphics[scale=0.9]{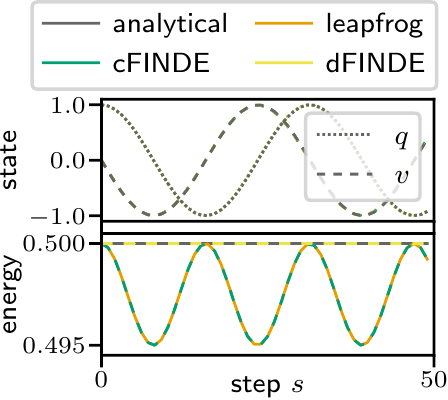}
  \vspace*{-2mm}
  \caption{Integration of a known mass-spring system by the leapfrog integrator.
    (top) States predicted by comparison methods.
    (bottom) Energy calculated from the states predicted.
  }\label{fig:mass_spring}
  \vspace*{-8mm}
\end{wrapfigure}

\subsection{Demonstration of First Integral Preservation}\label{sec:first_integral_preservation}
Before learning first integrals from data, we demonstrate that dFINDE can preserve first integrals without temporal discretization errors.
We used a mass-spring system, which had the state $\vu=(q\ v)^\top\!$, dynamics $\ddt q=v$ and $\ddt v=-q$, and system energy $E(q,v)=\frac{1}{2}(q^2+v^2)$.
Using an initial value of $(1.0\ \ 0.0)^\top$ and a time-step size of $\Delta t=0.2$, we solved the initial value problem of the true ODE using the leapfrog integrator with or without FINDE, with the true system energy $E$ as the first integral $V$.
Notably, no neural networks nor training were involved.

Figure \ref{fig:mass_spring} shows the results, along with the analytical solution.
The states predicted by comparison methods overlap and are apparently identical.
However, the energy obtained by the leapfrog integrator fluctuates and the same is true for cFINDE.
This is because the leapfrog integrator and cFINDE suffer from temporal discretization errors in first integrals.
In contrast, dFINDE preserves the energy accurately, the same as the analytical solution.
This is because dFINDE projects the state $(q\ v)^\top$ onto the discrete tangent space $\TvuM'$ at every step.
Although a smaller time-step size reduces temporal discretization errors, this result demonstrates the advantage of dFINDE.
See Appendix~\ref{appendix:mass_spring_dopri} for the case with the Dormand-Prince integrator.

\subsection{Finding Non-Hamiltonian First Integrals of Hamiltonian Systems}\label{sec:FINDE_hnn}
We evaluated cFINDE and dFINDE on learning from the 2-body dataset.
We used HNN as the base model $\hat f$.
We found that cFINDE and dFINDE obtained better performances if it did not treat the Hamiltonian $H$ of the HNN as one of the first integrals $V_k$.
The medians and standard deviations of five trials are summarized in the leftmost column of Table~\ref{tab:performance}.
The cFINDE achieved better VPTs than the original HNN with $K=1$ to $2$, and its performance was suddenly degraded with $K=3$.
The dFINDE showed a similar trend with slightly better performances; there is a trade-off between performance and computational cost.
The HNN with either cFINDE or dFINDE found two first integrals in addition to the Hamiltonian $H$ of the HNN.
Even though a two-body problem is a Hamiltonian system that an HNN can learn, the prior knowledge that there exist first integrals other than the Hamiltonian $H$ can be a clue that enables better learning.
Despite their better long-term prediction performance, the HNN with either cFINDE or dFINDE yielded 1-step errors worse than the HNN, indicating that the 1-step error is misleading as an evaluation criterion.

These example results are depicted in Fig.~\ref{fig:2body}.
In the absence of FINDE, the mass positions $(x_1,y_1)$ and $(x_2,y_2)$ became inaccurate in a short time and the center-of-gravity position $(x_c,y_c)=(\frac{x_1+x_2}{2},\frac{y_1+y_2}{2})$ deviated rapidly.
The HNN with cFINDE accurately predicted the state for a longer period.
Even after errors in the mass positions became non-negligible, errors in the center-of-gravity position were still small.
Figure~\ref{fig:2body_error} shows the absolute errors averaged over all trials, which demonstrate how the trend changes with cFINDE.
In both the $x$- and $y$-directions, the HNN without FINDE produced errors in the center-of-gravity position $x_c$ (or $y_c$), and those in the mass positions $x_1,x_2$ (or $y_1,y_2$) at a similar level.
In contrast, with the cFINDE, errors in the center-of-gravity position were much smaller than those in the mass positions, implying that errors in one mass position canceled out errors in the other.
We performed a symbolic regression of first integrals $\vV$ found by the neural network.
For $K=2$, the found first integrals $\vV$ were identical to the linear momenta in the $x$- and $y$-directions up to affine transformation in most cases.
See Appendix \ref{appendix:symbolic_regression} for detailed results.
Therefore, we conclude that FINDE not only had better prediction accuracy but also found and preserved linear momenta (which are related to symmetries in space) more accurately despite not having prior knowledge about symmetries.

\begin{table}[t]
  \centering
  \fontsize{8pt}{9pt}\selectfont
  \tabcolsep=.25mm
  \captionof{table}{Results of cFINDE and dFINDE.}
  \label{tab:performance}
  \begin{tabular}{lcrlrlrlrl}
    \toprule
               &     & \multicolumn{2}{c}{\textbf{2-body + HNN}}       & \multicolumn{2}{c}{\textbf{KdV}}           & \multicolumn{2}{c}{\textbf{2-pend}}             & \multicolumn{2}{c}{\textbf{FitzHugh--Nagumo}}
    \\
    \cmidrule(lr){3-4}\cmidrule(lr){5-6}\cmidrule(lr){7-8}\cmidrule(lr){9-10}
    Model      & $K$ & \multicolumn{1}{c}{\textbf{1-step}$\downarrow$} & \multicolumn{1}{c}{\textbf{VPT}$\uparrow$} & \multicolumn{1}{c}{\textbf{1-step}$\downarrow$} & \multicolumn{1}{c}{\textbf{VPT}$\uparrow$}    & \multicolumn{1}{c}{\textbf{1-step}$\downarrow$} & \multicolumn{1}{c}{\textbf{VPT}$\uparrow$} & \multicolumn{1}{c}{\textbf{1-step}$\downarrow$} & \multicolumn{1}{c}{\textbf{VPT}$\uparrow$} \\
    \midrule
    base model & --  & 5.17\std{0.57\zz}                               & 0.362\std{0.026}                           & 5.59\std{0.30\zz}                               & 0.339\std{0.038}                              & 0.82\std{0.02\zz}                               & 0.110\std{0.035}                           & 73.66\std{12.59}                                & 0.236\std{0.053}                           \\
    \midrule
               & 1   & 7.10\std{1.25\zz}                               & \ul{0.374}\std{0.036}                      & 6.24\std{0.44\zz}                               & \ul{0.371}\std{0.088}                         & \ul{0.75}\std{0.04\zz}                          & \ul{0.156}\std{0.042}                      & \ul{54.18}\std{8.12\zz}                         & 0.127\std{0.148}                           \\
               & 2   & 7.78\std{1.39\zz}                               & \ub{0.450}\std{0.052}                      & \ub{2.59}\std{0.11\zz}                          & \ul{0.608}\std{0.085}                         & \ul{0.73}\std{0.05\zz}                          & \ul{0.198}\std{0.088}                      & \ub{37.03}\std{3.81\zz}                         & \ub{0.437}\std{0.084}                      \\
    + cFINDE   & 3   & \multicolumn{1}{c}{$>\!10^3$}                   & 0.147\std{0.146}$^*\!\!$                   & \ul{3.19}\std{0.37\zz}                          & \ub{0.730}\std{0.091}                         & \ub{0.69}\std{0.03\zz}                          & \ul{0.411}\std{0.093}                      & \multicolumn{1}{c}{$>\!10^6$}                   & 0.007\std{0.007}$^*\!\!$                   \\
               & 4   & \multicolumn{1}{c}{$>\!10^3$}                   & 0.101\std{0.005}                           & \ul{3.65}\std{0.30\zz}                          & \ul{0.641}\std{0.071}                         & \ul{0.77}\std{0.07\zz}                          & \ul{0.395}\std{0.083}                      & \multicolumn{1}{c}{---}                                                                      \\
               & 5   & \multicolumn{1}{c}{$>\!10^3$}                   & 0.080\std{0.014}                           & \ul{4.68}\std{0.43\zz}                          & \ul{0.601}\std{0.069}                         & \ul{0.80}\std{0.07\zz}                          & \ub{0.585}\std{0.097}                      & \multicolumn{1}{c}{---}                                                                      \\
               & 6   & \multicolumn{1}{c}{$>\!10^3$}                   & 0.070\std{0.019}                           & 7.79\std{0.51\zz}                               & \ul{0.425}\std{0.067}                         & 12.53\std{0.00\zz}                              & 0.005\std{0.000}$^*\!\!$                   & \multicolumn{1}{c}{---}                                                                      \\
    \midrule
               & 1   & 7.01\std{1.06\zz}                               & \ul{0.379}\std{0.040}                      & 11.61\std{6.60\zz}                              & 0.288\std{0.083}                              & \ul{0.75}\std{0.10\zz}                          & \ul{0.152}\std{0.017}                      & \ul{47.07}\std{8.03\zz}                         & 0.117\std{0.122}                           \\
               & 2   & 7.03\std{1.00\zz}                               & \ub{0.475}\std{0.022}                      & \ub{2.70}\std{0.26\zz}                          & \ul{0.598}\std{0.059}                         & \ul{0.74}\std{0.05\zz}                          & \ul{0.271}\std{0.111}                      & \ub{33.24}\std{3.40\zz}                         & \ub{0.455}\std{0.032}                      \\
    + dFINDE   & 3   & 54.78\std{36.39}                                & 0.309\std{0.024}                           & \ul{3.78}\std{0.27\zz}                          & \ul{0.636}\std{0.024}                         & \ub{0.69}\std{0.05\zz}                          & \ul{0.447}\std{0.081}                      & 319.70\std{91.11}                               & 0.049\std{0.007}                           \\
               & 4   & \multicolumn{1}{c}{$>\!10^3$}                   & 0.102\std{0.015}                           & \ul{3.48}\std{0.32\zz}                          & \ub{0.780}\std{0.059}                         & \ul{0.71}\std{0.03\zz}                          & \ul{0.454}\std{0.060}                      & \multicolumn{1}{c}{---}                                                                      \\
               & 5   & \multicolumn{1}{c}{$>\!10^3$}                   & 0.086\std{0.011}$^*\!\!$                   & \ul{5.26}\std{0.15\zz}                          & \ul{0.718}\std{0.038}                         & 0.86\std{0.09\zz}                               & \ub{0.591}\std{0.087}                      & \multicolumn{1}{c}{---}                                                                      \\
               & 6   & \multicolumn{1}{c}{$>\!10^3$}                   & 0.059\std{0.017}                           & 9.60\std{3.61\zz}                               & \ul{0.573}\std{0.121}                         & 58.88\std{22.98}                                & 0.037\std{0.039}                           & \multicolumn{1}{c}{---}                                                                      \\
    \bottomrule
  \end{tabular}\\[1pt]
  \raggedright
  \footnotesize
  Notes: Standard deviation follows the $\pm$ symbol; underlined results are better than those of the base models; bold font indicates best results; $^*$ denotes trials that failed in training because of underflow of time-step size.
  \vspace*{-2mm}
\end{table}

\begin{figure}[t]
  \begin{minipage}{2.8in}
    \raggedright
    \includegraphics[scale=0.9]{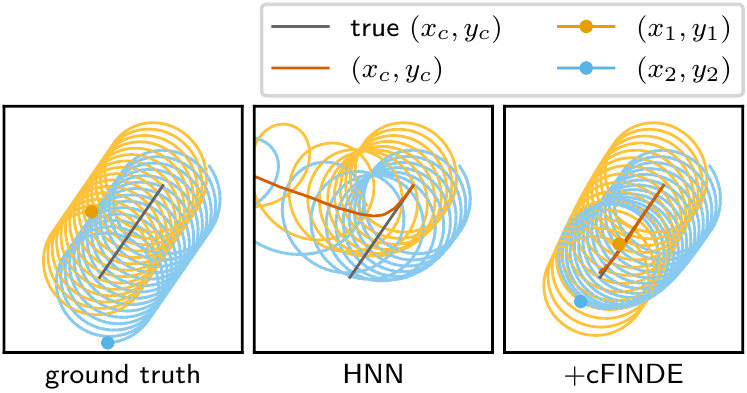}
    \begin{minipage}{2.7in}
      \caption{Example results of 2-body dataset.
        (left) Ground truth.
        (middle) HNN.
        (right) HNN with cFINDE.
      }
      \label{fig:2body}
    \end{minipage}
  \end{minipage}
  \begin{minipage}{2.6in}
    \raggedleft
    \vspace*{-3mm}
    \centering\includegraphics[scale=0.9]{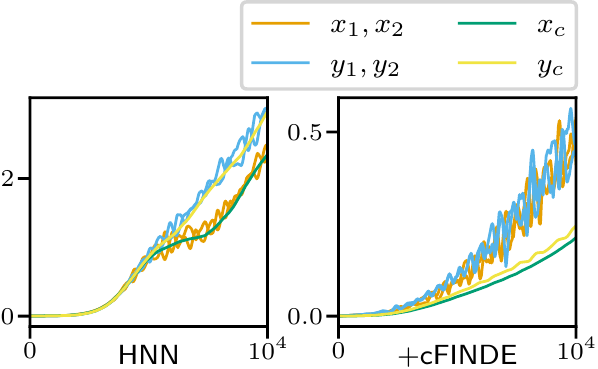}
    \vspace*{-2mm}
    \caption{
      Mean absolute errors for results of 2-body dataset.
      (left) HNN.
      (right) HNN with cFINDE.
    }
    \label{fig:2body_error}
  \end{minipage}
  \vspace*{-8mm}
\end{figure}

\subsection{Finding First Integrals of Unknown Systems}\label{sec:FINDE_node}
It is often unclear whether a target system is a Hamiltonian system or not, but one can expect that it has several first integrals.
We evaluated cFINDE and dFINDE using NODE as the base model and display the results in Table~\ref{tab:performance}.

For the KdV dataset, the NODE with either cFINDE or dFINDE obtained improved VPTs for a wide range of $K$.
Figure~\ref{fig:kdv} shows an example result.
The prediction states were apparently similar.
In the absence of FINDE, the NODE increased all of its errors in proportion to time.
With cFINDE, the error in total mass increased at the point where the two solitons collided, but then returned to the original level.
Although the calculation is slightly inaccurate, the cFINDE learned to preserve the total mass.
The error in energy continued to increase for $K=2$, but remained within a small range for $K=3$.
These results suggest that the first or second quantity learned by the cFINDE was total mass, the third quantity was system energy, and the remaining quantity may correspond to one of the many first integrals of the KdV equation.

For the 2-pend dataset, the NODE with either cFINDE or dFINDE obtained improved VPTs with $K=1$ to $5$.
In addition to the system energy, the double pendulum has two holonomic constraints on the position, which lead to two additional constraints involving the velocity (see Appendix \ref{appendix:datasets} for details).
Thus, it is reasonable that the NODE with either cFINDE or dFINDE obtained the best VPT for $K=5$ first integrals and completely failed for $K>5$ first integrals.
As exemplified in Fig.~\ref{fig:2pend_result}, the NODE without FINDE did not preserve the lengths of rods, making the states deviate gradually.
See Appendix \ref{appendix:holonomic_constraints} for the case when actual constraints are known.
For the FitzHugh--Nagumo dataset, the NODE with either cFINDE or dFINDE obtained improved VPTs for $K=2$.
As exemplified in Fig.~\ref{fig:fn_result}, the ground truth state converged to a periodic orbit, and only the NODE with cFINDE for $K=2$ reproduced similar dynamics.
Without FINDE, the state did not remain in a limited region.
For $K=1$, the state converged to a wrong equilibrium; the sole quantity $V_1$ may have attempted and failed to learn both first integrals.
We conclude that both cFINDE and dFINDE found all first integrals of the 2-pend and FitzHugh--Nagumo datasets; $K=5$ and $K=2$, respectively.

\begin{figure}[t]
  \centering
  \includegraphics[scale=0.9]{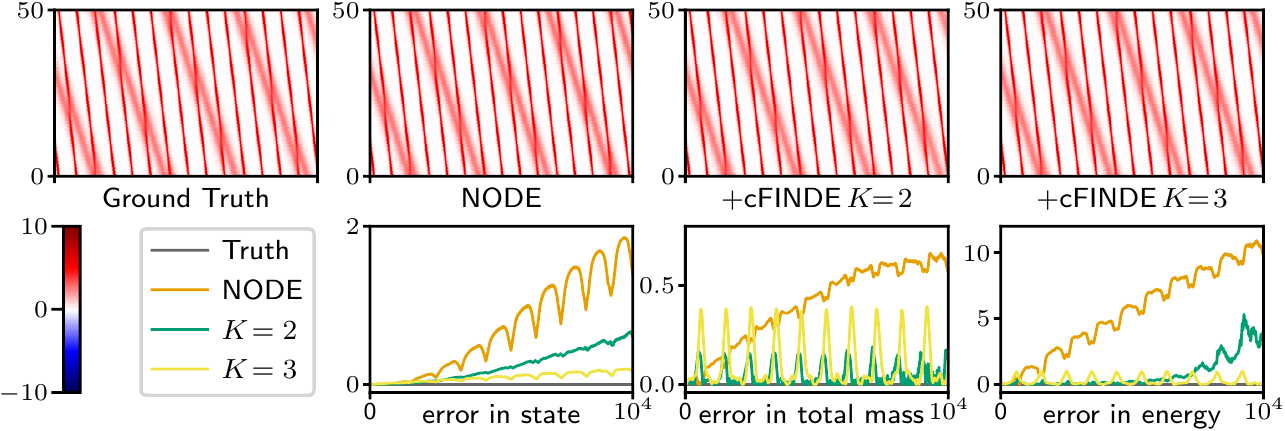}
  \vspace*{-2mm}
  \caption{Example results of KdV dataset.
    (top) Predicted states.
    Red belts denote moving solitons.
    (bottom) Mean absolute errors in states $\vu$, total mass $\sum_{k=1}^N u_k$, and energy, from left to right.}
  \label{fig:kdv}
  \vspace*{1mm}
  \begin{minipage}{2.3in}
    \centering
    \includegraphics[scale=0.9]{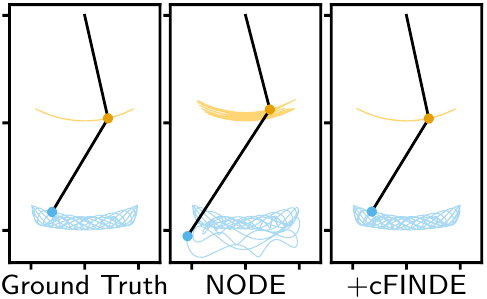}
    \vspace*{-2mm}
    \caption{Example results of 2-pend dataset for 2,000 steps.
      (left) Ground truth.
      (middle) NODE.
      (right) NODE with cFINDE for $K=5$.
    }
    \label{fig:2pend_result}
  \end{minipage}\hfill
  \begin{minipage}{3.0in}
    \raggedleft
    \includegraphics[scale=0.9]{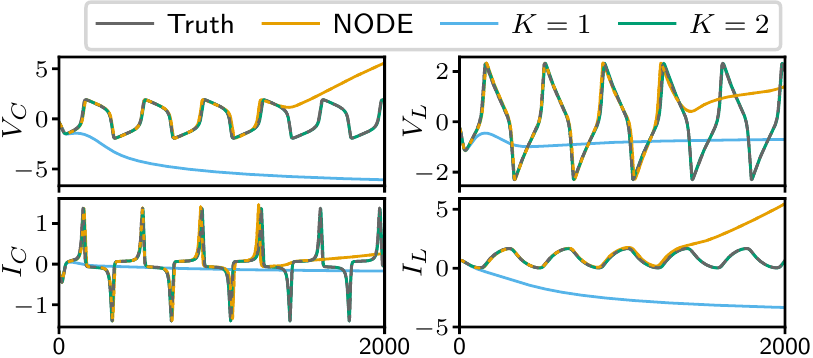}
    \vspace*{-2mm}
    \caption{Example results of FitzHugh--Nagumo dataset.
      Each panel shows one of four states.
    }
    \label{fig:fn_result}
  \end{minipage}
  \vspace*{-4mm}
\end{figure}

\section{Conclusion}
This study proposed \emph{first integral-preserving neural differential equation} (FINDE), which can find and preserve any type of first integrals from data in a unified manner.
FINDE projects the time evolution onto the submanifold defined using the (discrete) gradients of first integrals represented by a neural network.
We experimentally demonstrated that FINDE found and preserved first integrals that come from the energy and mass conservation laws, symmetries in space, and constraints, thereby predicting the dynamics for far longer.
FINDE is available even for an energy-dissipating system.
When FINDE obtains the best prediction accuracy with $K=K'$, it suggests that the target system has at least $K'$ first integrals.
Hence, FINDE has the potential to make scientific discoveries by revealing geometric structures of dynamical systems.
See Appendix~\ref{appendix:how_to_k} for more discussions on $K$.

The numerical error tolerance $10^{-9}$ was negligible compared to the 1-step errors (which were $10^{-5}$ to $10^{-4}$ in absolute error).
However, the dFINDE tended to obtain much better VPTs than the cFINDE.
This result suggests that a method leading to smaller numerical errors produces a model with smaller modeling errors, as observed in previous works~\citep{Chen2020a,Matsubara2020}.
These results may form a new frontier for integrating numerical and modeling errors.

\section*{Reproducibility Statement}
See Section~\ref{sec:setting} for experimental settings.
More detailed descriptions can be found in Appendix~\ref{appendix:training_objective} for training procedure and Appendix~\ref{appendix:datasets} for datasets.
The authors have enclosed the source code for generating the datasets and running the experiments as supplementary material.

\section*{Acknowledgement}
This study was partially supported by JST CREST (JPMJCR1914), JST PRESTO (JPMJPR21C7), and JSPS KAKENHI (19K20344, 20K11693).


\clearpage
\newpage
\appendix
\renewcommand\thetable{A\arabic{table}}
\setcounter{table}{0}
\renewcommand\thefigure{A\arabic{figure}}
\setcounter{figure}{0}
\renewcommand\theequation{A\arabic{equation}}
\setcounter{equation}{0}

\section{Hamiltonian System, its Generalization, and First Integrals}\label{appendix:system}
\paragraph{Preliminary}
In this section, we briefly introduce potential target systems and related works.
Methods proposed by related works use specific prior knowledge about target systems, such as constraints.
In contrast, our proposed FINDE assumes a situation where neural networks learn systems with unknown properties.
See, for example, \citet{Hairer2006,VanderSchaft2014} for more details about geometric mechanics.

On an $N$-dimensional manifold $\M$, an ODE is defined using a vector field $f:\M\rightarrow\TuM$, which maps a point $\vu$ on the manifold $\M$ to a tangent vector $f(\vu)$ on the tangent space $\TuM$.
The NODE defines an ODE in this way~\citep{Chen2018e}.
Given a scalar-valued function $H:\M\rightarrow\R$ on the manifold $\M$, its differential $\dif H:\M\rightarrow\TcuM$ is a cotangent vector field (a.k.a.~a differential 1-form), which maps a point $\vu$ on the manifold $\M$ to a cotangent vector $\dif H(\vu)$ on the cotangent space $\TcuM$.

\paragraph{Hamiltoanian System}
A Hamiltonian system is defined using a non-degenerate closed differential 2-form $\omega$ called symplectic form, which is a skew-symmetric bilinear map $\omega_\vu:\TuM\times\TuM\rightarrow\R$ at point $\vu$.
A symplectic form assigned to a manifold is called the symplectic structure.
The coordinate-free form of Hamilton's equation is $\ddt\vu=X_H(\vu),\ \omega_\vu(X_H(\vu),\vw)=\langle\dif H(\vu),\vw\rangle$ for any $\vw\in\TuM$, where $X_H$ is the Hamiltonian vector field.
The symplectic form $\omega$ gives rise to a bundle map $\omega^\flat_\vu:\TuM\rightarrow\TcuM$, with which Hamilton's equation is rewritten as $\ddt \vu=X_H(\vu)=(\omega^\flat_\vu)^{-1}(\dif H(\vu))$.
The right-hand side is locally equivalent to the product of a coefficient matrix $S$ and the gradient $\nabla H$ of the Hamiltonian $H$.
Then, Hamilton's equation is obtained as $\ddt \vu=S\nabla H(\vu)$.
Hamiltonian systems are often expressed in the canonical form, in other words, they are defined on Darboux coordinates, on which the state $\vu$ is the paired generalized position $\vq$ and generalized momentum $\vp$.
The corresponding coefficient matrix is $S=\begin{psmallmatrix} 0 & I_n \\ -I_n & 0\end{psmallmatrix}$ for $2n=N$ and the $n$-dimensional identity matrix $I_n$.
The HNN was developed to model Hamiltonian systems in the canonical forms~\citep{Greydanus2019}.

An Euler--Lagrange equation with a hyperregular Lagrangian and a Lotka--Volterra equation are also Hamiltonian systems; however, their coordinate systems are not Darboux coordinates.
A neural symplectic form (NSF) handles this class of equations~\citep{Chen2021NeurIPS}.
The KdV equation is also a Hamiltonian system not on Darboux coordinates.
For Hamiltonian PDE systems, HNN++ was proposed~\citep{Matsubara2020}.
According to Darboux's theorem, any Hamiltonian system on an even--dimensional manifold can be transformed into the canonical form.

Noether's theorem states that a continuous symmetry of a system leads to a conservation law.
A Hamiltonian system is symmetric (invariant) to translation in time and conserves the Hamiltonian $H$.
A two-body problem is symmetric to translation and rotation in space and conserves linear and angular momenta.
These quantities are first integrals.
LieConv and EMLP-HNN had such symmetries implemented in their architectures~\citep{Finzi2020,Finzi2021}.
A pendulum is not symmetric to translation and rotation in space and does not conserve linear and angular momenta, but does exchange them with the base to which it is fixed.

\paragraph{Poisson System}
A Poisson system is named after a Poisson bracket $\{\cdot,\cdot\}$, but it is convenient to refer to it as a degenerate Hamiltonian system.
A Poisson bracket is defined using a Poisson 2-vector $B$, which is a skew-symmetric bilinear map $B_\vu:\TcuM\times\TcuM\rightarrow\R$ at point $\vu$.
The Poisson 2-vector $B$ gives rise to a bundle map $B^\sharp_\vu:\TcuM\rightarrow\TuM$ and defines Hamilton's equation as $\ddt \vu=B^\sharp(\dif H(\vu))$.
The Darboux--Lie theorem states that any Poisson system can be transformed into the canonical form $\ddt \vu=S\nabla H(\vu)$ by using a matrix $S=\begin{psmallmatrix} 0 & I_k & 0 \\ -I_k & 0 & 0 \\ 0&0&0\end{psmallmatrix}$ for $2k<N$.
The last $N-2k$ elements remain unchanged and correspond to the first integrals.
In this sense, a Poisson system is a degenerate Hamiltonian system.
A Poisson 2-vector assigned to a manifold is called a Poisson structure.
Several models of the dynamics of disease spreading and chemical reactions are Poisson systems, and total population and molecular mass are typical first integrals.

A Poisson neural network (PNN) learns to transform a given Poisson system into a canonical form~\citep{Jin2020a}.

\paragraph{Constrained Hamiltonian System}
A constraint $C(\vq)=0$ on the position $\vq$ is called a holonomic constraint.
Holonomic constraint appear, for example, when the arm's length restricts the position of a robot's hand.
Differentiating a holonomic constraint $C(\vq)=0$ yields a constraint involving the velocity $G(\vq,\vv)=\pderiv{C}{\vq}\vv=0$, which is simply called a velocity constraint.
Hence, each holonomic constraint leads to two first integrals $C$ and $G$.
A Hamiltonian system with holonomic constraints is also a Poisson system; in particular, it is a constrained Hamiltonian system.

A CHNN incorporates the known holonomic constraints $C(\vq)$ and corresponding velocity constraints $G(\vq,\vv)$ of a Hamiltonian system in the canonical form~\citep{Finzi2020b}.
The original study suggested that CHNN may learn holonomic constraints from data, but this has not been tested.
For modeling a constrained Hamiltonian system, it is sufficient to incorporate only velocity constraints $G(\vq,\vv)$ because a holonomic constraint $C(\vq)$ is implicitly satisfied if the corresponding velocity constraint $G(\vq,\vv)$ is satisfied.
\citet{Celledoni2022} used such formulation, and extended HNN and CHNN to systems on non-Euclidean spaces.
A neural projection method learns fixed holonomic constraints, as well as inequality constraints, which are outside the scope of this study~\citep{Yang2020a}.
This method updates the state by solving an optimization problem similar to Eq.~\eqref{eq:projection_method} iteratively using the gradient descent method at every training step.
Subsequently, it applies the backpropagation algorithm to all the optimization iterations.
Thus, it has high computational and memory costs.

These studies mainly focused on physically-induced holonomic constraints and may not work for other first integrals, as shown in Appendices \ref{appendix:holonomic_constraints} and \ref{appendix:neural_projection}.
However, the purpose of FINDE is to find and preserve general first integrals, including energy and mass not limited by constraints.

\paragraph{Dirac Structure}
A Dirac structure is named after a Dirac bracket, a generalization of the Poisson bracket~\citep{VanderSchaft2014}, and can be found in various systems.
For a rolling disk, the direction in which the disk can move forward without slipping is limited by the disk's orientation.
This constraint is called a non-holonomic constraint.
In an electric circuit, when elements are connected in series, the current flow through each element is always the same.
This constraint is called Kirchhoff's current law.
One can find Dirac structures in these systems.
The dissipative SymODEN was proposed to model a port-Hamiltonian system in the canonical form~\citep{Zhong2020a}, which is a special case of the Dirac structure.
To the best of our knowledge, a neural network model for a general Dirac structure has not yet been proposed.
FINDE is the first neural network method to learn Dirac structures better than NODE can, even though it is not specialized for Dirac structures.

\paragraph{PDE with Mass Conservation}
The total mass of a PDE system is sometimes preserved~\citep{Furihata2010}.
The KdV equation is a Hamiltonian system that describes shallow water waves, in which the energy and total mass are preserved.
The Cahn--Hilliard equation is a model of phase separation of copolymer melts, in which the total mass is preserved, but the energy is dissipated.
In general, a quantity in an area is preserved if its flux entering minus its flux leaving is zero.
Deep conservation extracts latent dynamics of a PDE system and preserves a quantity of interest by forcing its flux to be zero~\citep{Lee2019a}.
HNN++ also ensures mass conservation by designing a coefficient matrix that determines local interaction~\citep{Matsubara2020}.

\paragraph{General First Intergals}
A concurrent study, ``Constants-of-motion network,'' introduced the penalty loss function so that NODEs learn to preserve first integrals~\citep{Kasim2022}; however, unlike other related methods, this method does not guarantee preservation.
A Noether network was proposed to model videos that do not always capture physical phenomena~\citep{Alet2021}.
A subset of the latent variable is assumed to represent image features that do not change during a video, such as the appearance of objects.
For prediction, these features are forced not to change.
The Noether network is potentially useful for learning physical phenomena from videos, but is more similar to semantic manipulation of latent variables~\citep{Shen2020b}.

Some studies have investigated methods that do not predict dynamics but specialize in finding first integrals~\citep{Fukunaga1971,Liu2021b}.
These methods can be used to help FINDE determine the hyperparameter $K$.
They commonly estimate the number $(N-K)$ of dimensions of the tangent space $\TuM'$ of the submanifold $M'$ at point $\vu$ using its neighbors.
For example, AI Poincar\'e proposed by \citet{Liu2021b} assumes that all data points share the submanifold $\M'$ and uses an autoencoder to reconstruct the tangent space $\TuM'$.
Hence, it can only process a single long time series with fixed first integrals.
In contrast, our proposed FINDE can leverage a dataset of multiple time series with different values of the first integrals.

\section{Details of Methods}

\subsection{Derivation of FINDE}\label{appendix:derivation}
\paragraph{Continuous FINDE (cFINDE)}
Let $\vu^{s}$ denote a current state and $\hat f$ denote a vector field.
After a time interval $\Delta t$, the state transitions to $\hat \vu^{s+1}$.
A typical projection method projects the state $\tilde\vu^{s+1}$ onto a submanifold $\M'$ and obtains a state $\vu^{s+1}$, which preserves the first integrals $\vV=(V_1\ \dots\ V_K)^\top$.
This procedure is defined as an optimization problem in Eq.~\eqref{eq:projection_method};
\begin{equation}
  \arg\min_{\vu^{s+1}} \|\vu^{s+1}-\tilde\vu^{s+1}\| \mbox{ subject to } V_k(\vu^{s+1})-V_k(\vu^s)=0 \mbox{ for }k=1,\dots,K.
\end{equation}
One can solve the problem using the method of Lagrange multipliers.
A Lagrangian function is
\begin{equation}
  \textstyle F(\vu^{s+1},\vlambda)=\frac{1}{2}||\vu^{s+1}-\tilde\vu^{s+1}||^2_2+(\vV(\vu^{s+1})-\vV(\vu^s))^\top\vlambda',~\label{eq:lagrangian_function}
\end{equation}
where $\vlambda'$ is the Lagrange multiplier.
The stationary point satisfies
\begin{equation}
  \begin{aligned}
    \textstyle\pderiv{F}{\vu^{s+1}} & \textstyle=\vu^{s+1}-\tilde\vu^{s+1}+\left(\pderiv{\vV}{\vu^{s+1}}\right)^{\!\!\!\top}\vlambda'=\mathbf{0}, \\
    \textstyle\pderiv{F}{\vlambda'} & =\vV(\vu^{s+1})-\vV(\vu^s)=\mathbf{0}.
  \end{aligned}
\end{equation}
Subsequently, a projection method can be redefined as
\begin{equation}
  \begin{aligned}
    \vu^{s+1}                 & \textstyle=\tilde\vu^{s+1}-\left(\pderiv{\vV}{\vu^{s+1}}\right)^{\!\!\!\top}\vlambda', \\
    \vV(\vu^{s+1})-\vV(\vu^s) & =\mathbf{0}.
  \end{aligned}\label{eq:projection_redefined}
\end{equation}
We transform Eq.~\eqref{eq:projection_redefined} into
\begin{equation}
  \begin{aligned}
    \textstyle\frac{\vu^{s+1}-\vu^{s}}{\Delta t}         & \textstyle=\frac{\tilde\vu^{s+1}-\vu^s}{\Delta t}-\left(\pderiv{\vV}{\vu^{s+1}}\right)^{\!\!\!\top}\vlambda, \\
    \textstyle\frac{\vV(\vu^{s+1})-\vV(\vu^s)}{\Delta t} & =\mathbf{0},\label{eq:FINDE_before_limit}
  \end{aligned}
\end{equation}
where $\vlambda=\vlambda'/\Delta t$.
Taking the limit as $\Delta t\rightarrow +0$, we obtain Eq.~\eqref{eq:cFINDE_base};
\begin{equation}
  \begin{aligned}
    f(\vu^{s})                & \textstyle=\hat f(\vu^{s})-\left(\pderiv{\vV}{\vu^{s}}\right)^{\!\!\!\top}\vlambda, \\
    \textstyle\ddt \vV(\vu^s) & =\mathbf{0}.
  \end{aligned}
\end{equation}
The second equation ensures that a state transition following the new vector field $f$ preserves the first integrals $\vV$.
By eliminating the Lagrange multiplier $\vlambda(\vu)$, we define the cFINDE as in Eq.~\eqref{eq:cFINDE_equation}, that is,
\begin{equation}
  \textstyle \ddt\vu=f(\vu)=(I-Y(\vu))\hat f(\vu) \text{ for } Y(\vu)=M(\vu)^\top(M(\vu) M(\vu)^\top)^{-1}M(\vu), \label{eq:cFINDE_equation_appendix}
\end{equation}
where $M=\pderiv{\vV}{\vu}$.
Because of the above derivation, the cFINDE can be considered a continuous-time version of a projection method.
The preservation of first integrals can be proved as follows.

\begin{proof}[Proof of Theorem~\ref{thm:cFINDE}]
  \begin{equation*}
    \begin{aligned}
      \textstyle \ddt \vV(\vu)
       & =\textstyle \pderiv{\vV}{\vu}\ddt\vu                                               \\
       & =\textstyle M(\vu)f(\vu)                                                           \\
       & =\textstyle M(\vu)(I-M(\vu)^\top(M(\vu) M(\vu)^\top)^{-1}M(\vu))\hat f(\vu)        \\
       & =\textstyle (M(\vu)-(M(\vu)M(\vu)^\top)(M(\vu) M(\vu)^\top)^{-1}M(\vu))\hat f(\vu) \\
       & =\textstyle (M(\vu)-M(\vu))\hat f(\vu)                                             \\
       & =\mathbf{0}.
    \end{aligned}
  \end{equation*}
  Hence, it holds that $\ddt V_k(\vu)=0$ for $k=1,\dots,K$, indicating that the cFINDE $\ddt\vu=f(\vu)$ preserves all first integrals $V_k$ in continuous time.
\end{proof}

\paragraph{Discrete FINDE (dFINDE)}
For dFINDE, we take the discrete gradient of the Lagrangian equation in Eq.~\eqref{eq:lagrangian_function} and obtain the discrete version of the necessary conditions for the stationary point;
\begin{equation}
  \begin{aligned}
    \bnabla _{(\vu^{s+1},\vu^{s})}F & \textstyle=\vu^{s+1}-\tilde\vu^{s+1}+\bM(\vu^{s+1},\vu^s)\vlambda'=\mathbf{0}, \\
    \textstyle\pderiv{F}{\vlambda'} & =\vV(\vu^{s+1})-\vV(\vu^s)=\mathbf{0}.
  \end{aligned}
\end{equation}
$\bM(\vu^{s+1},\vu^s)$ corresponds to the Jacobian $\pderiv{\vV}{\vu}$.
By substituting the base model $\frac{\tilde \vu^{s+1}-\vu^{s}}{\Delta t^s}=\hat\psi(\vu^{s};\Delta t^s)$ and the dFINDE $\frac{\vu^{s+1}-\vu^{s}}{\Delta t^s}=\psi(\vu^{s+1},\vu^{s};\Delta t^s)$ into the above equation and dividing the first equation by $\Delta t$, we obtain Eq.~\eqref{eq:psi};
\begin{equation}
  \begin{aligned}
    \textstyle \frac{\vu^{s+1}-\vu^{s}}{\Delta t^s} & =\psi(\vu^{s+1},\vu^{s};\Delta t^s),                                                    \\
    \psi(\vu^{s+1},\vu^{s};\Delta t^s)              & =\hat\psi(\vu^{s};\Delta t^s)-\bM(\vu^{s+1},\vu^{s})^{\top}\vlambda(\vu^{s+1},\vu^{s}), \\
    \vV(\vu^{s+1})-\vV(\vu^{s})                     & =\mathbf{0},
  \end{aligned}\label{eq:psi_appendix}
\end{equation}
where $\vlambda=\vlambda'/\Delta t^s$.
By eliminating the Lagrange multiplier $\vlambda$, we define the dFINDE as in Eq.~\eqref{eq:update_dFINDE}, that is,
\begin{equation}
  \textstyle \frac{\vu^{s+1}-\vu^{s}}{\Delta t^s}=\psi(\vu^{s+1},\vu^{s};\Delta t^s)=(I-\bY(\vu^{s+1}\!\!,\vu^{s}))\hat\psi(\vu^{s};\Delta t^s)
  \text{ for } \bY=\bM^\top(\bM\ \bM^\top)^{-1}\bM.
  \label{eq:update_dFINDE_appendix}
\end{equation}
The preservation of first integrals can be proved as follows.
\begin{proof}[Proof of Theorem~\ref{thm:dFINDE}]
  \begin{equation*}
    \begin{aligned}
      \vV(\vu^{s+1})-\vV(\vu^{s})
       & =\textstyle \bM(\vu^{s+1},\vu^{s}) (\vu^{s+1}-\vu^{s})                                                 \\
       & =\textstyle \bM(\vu^{s+1},\vu^{s}) \psi(\vu^{s+1},\vu^{s};\Delta t^s)\Delta t^s                        \\
       & =\textstyle \bM (I-\bM^\top(\bM\ \bM^\top)^{-1}\bM)\hat\psi(\vu^{s+1},\vu^{s};\Delta t^s)\Delta t^s    \\
       & =\textstyle (\bM-(\bM\bM^\top)(\bM\ \bM^\top)^{-1}\bM)\hat\psi(\vu^{s+1},\vu^{s};\Delta t^s)\Delta t^s \\
       & =\textstyle (\bM-\bM)\hat\psi(\vu^{s+1},\vu^{s};\Delta t^s)\Delta t^s                                  \\
       & =\mathbf{0}.
    \end{aligned}
  \end{equation*}
  Hence, it holds that $V_k(\vu^{s+1})=V_k(\vu^s)$ for $k=1,\dots,K$, indicating that the dFINDE $\frac{\vu^{s+1}-\vu^{s}}{\Delta t^s}=\psi(\vu^{s+1},\vu^{s};\Delta t^s)$ preserves all first integrals $V_k$ in discrete time.
\end{proof}

\subsection{Discrete Gradient}\label{appendix:discrete_gradient}
A discrete gradient is a discrete analogue to a gradient~\citep{Furihata2010,Gonzalez1996,Hong2011}.
Discrete gradients that satisfy Definition \ref{def:discrete_gradient} are not unique, and many variations have been proposed.
For a neural network, \citet{Matsubara2020} proposed the automatic discrete differentiation algorithm (ADDA).
We briefly introduce the algorithm in the case of finite-dimensional Euclidean spaces.
The differential $\dif g$ of a function $g:\R^N\rightarrow\R^M$ is a linear operator $\dif g_\vu:\R^N\rightarrow\R^M$ at point $\vu$ and satisfies
\begin{equation}
  \lim_{||\vh||_{\R^N}\rightarrow 0}\frac{||g(\vu+\vh)-g(\vu)+\dif g_\vu(\vh)||_{\R^M}}{||\vh||_{\R^N}}=0.
\end{equation}
The differential $\dif g$ acting on a vector $\vw$ is equivalent to the product of a vector $\vw$ with the Jacobian $J_g(\vu)$ of the function $g$ at point $\vu$: $\dif g_\vu(\vw)=J_{g(\vu)}\vw$.
Similarly, according to the chain rule, the differential $d(h\circ g)$ of a composition $h\circ g$ of functions $g,h$ is equivalent to the multiplication with a series $J_{h(g(\vu))}J_{g(\vu)}$ of Jacobians.
Therefore, the automatic differentiation algorithm obtains the differential of a neural network.
The differential $\dif g$ of a function $g:\R^N\rightarrow\R$ is a horizontal vector, and the gradient $\nabla g$ of the function $g$ is a vertical vector dual to the differential.
Therefore, the gradient $\nabla g$ is obtained by transposing the differential $\dif g$.
The ADDA replaces each Jacobian with its discrete analogue.
For linear layers, such as fully-connected and convolution layers, the discrete Jacobian is identical to the ordinary Jacobian.
For element-wise nonlinear layers, such as activation functions, a diagonal matrix composed of the slopes between two inputs can act as the discrete Jacobian.
A discrete gradient obtained by the above steps satisfies Definition \ref{def:discrete_gradient}.

\subsection{Prediction and Training Procedures}\label{appendix:training_objective}
For ODEs modeled by neural networks, various training and prediction strategies have been proposed to date~\citep{Chen2018e,Chen2020a,Course2020,Matsubara2020,Zhong2020}; FINDE can adopt any of these.
In our experiments, we used the following simple strategies.

In the case of the cFINDE and base models, taking a state $\vu_\textrm{GT}^{s}$ from the dataset, a numerical integrator solves the ODE $\ddt\vu=f(\vu)$ and predicts the next state $\vu_\textrm{pred.}^{s+1}$.
This process can be informally expressed as
\begin{equation}
  \vu_\textrm{pred.}^{s+1}\simeq\vu_\textrm{GT}^{s}+\int_{t^s}^{t^{s}+\Delta t^s}f(\vu(\tau))\rd\tau \text{ for } \vu(t^s).
\end{equation}
We solved this integration using \textsf{torchdiffeq.}\hspace{0pt}\textsf{odeint}.
The prediction accuracy can be evaluated using the difference between the predicted state $\vu_\textrm{pred.}^{s+1}$ and ground truth $\vu_\textrm{GT}^{s+1}$ taken from the dataset.
We normalized the difference by the time-step size $\Delta t^s$ and defined the \emph{1-step error} $\mathcal L_\textrm{1-step}$ as
\begin{equation}
  \mathcal L_\textrm{1-step}(\vu_\textrm{pred.}^{s+1};\vu_\textrm{GT}^{s+1},\vu_\textrm{GT}^{s},\Delta t^s)=\left\|\frac{\vu_\textrm{GT}^{s+1}-\vu_\textrm{GT}^{s}}{\Delta t^s}-\frac{\vu_\textrm{pred.}^{s+1}-\vu_\textrm{GT}^{s}}{\Delta t^s}\right\|^2_2.\label{eq:step_error}
\end{equation}
The cFINDE and base models were trained to minimize the 1-step error $\mathcal L_\textrm{1-step}$.

In the case of the dFINDE, the next state $\vu_\textrm{pred.}^{s+1}$ is predicted by solving Eq.~\eqref{eq:update_dFINDE} as an implicit scheme; in particular,
\begin{equation}
  \arg\min_{\vu_\textrm{pred.}^{s+1}} \left\|\frac{\vu_\textrm{pred.}^{s+1}-\vu_\textrm{GT}^{s}}{\Delta t^s}-(I-\bY(\vu_\textrm{pred.}^{s+1},\vu_\textrm{GT}^{s}))\hat\psi(\vu_\textrm{GT}^{s};\Delta t^s)\right\|.
\end{equation}
Therefore, prediction by the dFINDE is implicit.
For evaluation, we solved this scheme using \textsf{scipy.}\hspace{0pt}\textsf{optimize.}\hspace{0pt}\textsf{fsolve} and obtained the 1-step error in Eq.~\eqref{eq:step_error}.
However, during the training phase, the ground truth $\vu_\textrm{GT}^{s+1}$ of the next state is known.
Hence, we substituted this into Eq.~\eqref{eq:update_dFINDE}, and then used the difference between the left- and right-hand sides of the dFINDE as the loss function:
\begin{equation}
  \mathcal L_\textrm{dFINDE}(\vu_\textrm{GT}^{s+1},\vu_\textrm{GT}^{s},\Delta t^s)=\left\|\frac{\vu_\textrm{GT}^{s+1}-\vu_\textrm{GT}^{s}}{\Delta t^s}-(I-\bY(\vu_\textrm{GT}^{s+1},\vu_\textrm{GT}^{s}))\hat\psi(\vu_\textrm{GT}^{s};\Delta t^s)\right\|^2_2.
\end{equation}
The discrete Jacobian $\bM$ (and hence $\bY$) can be obtained explicitly, and an explicit numerical integrator can be used for the base model $\hat \psi$.
Hence, the process to obtain the value of the loss function is explicit, and the dFINDE can be trained in an explicit way, whereas the prediction is still implicit.

Some previous studies have proposed alternative strategies.
For example, a loss function can be defined as the sum of the errors at multiple time points during a long-term prediction.
The cFINDE can naturally adopt such a training strategy, and the dFINDE can adopt it after a minor modification.
While it is helpful to pursue absolute performance, it requires additional hyperparameters, such as the length of prediction time, and additional effort to adjust them.
We used the 1-step error in the present study for simplicity and fair comparisons.

The function $V(\vu)$ learning a first integral may become a constant function during training; subsequently, its Jacobian matrix vanishes ($\pderiv{V(\vu)}{\vu}\equiv 0$).
In this case, our algorithm returns a division-by-zero error because it requires the inverse of the matrix $\pderiv{V(\vu)}{\vu}\pderiv{V(\vu)}{\vu}^\top$ for the projection.
We have not taken any special measures to prevent such errors, but no errors occurred in any experiments with proper settings.
The division-by-zero errors have occurred only when FINDE assumes an unreasonable number of first integrals (e.g., $K=6$ for the double pendulum, which has five first integrals).
FINDE works correctly even when the functions $f(\vu)$ and $V(\vu)$ learn the same first integrals; we verified such a case in Section \ref{sec:first_integral_preservation}, where both functions are known.

FINDE learns first integrals point-by-point, and the found first integral is not always consistent over the domain.
The same can be said about the energy function of HNN, and this type of problem is an open problem for neural network models of dynamical systems.

\section{Details of Datasets}\label{appendix:datasets}

To generate each dataset, we used scipy package and the Dormand--Prince method (dopri5) with the default relative tolerance of $10^{-9}$, unless otherwise stated.
Experiments on the KdV dataset were performed with double precision, and all other experiments were performed with single precision.

\paragraph{Hamiltonian System in Canonical Form: Two-Body Problem}
A gravitational two-body problem on a 2-dimensional configuration space has a state $\vu$ composed of the 4-dimensional position $\vq=(x_1\ y_1\ x_2\ y_2)^\top$ and 4-dimensional velocity $\vv=(v_{x1}\ v_{y1}\ v_{x2}\ v_{y2})^\top$.
This is a second-order ODE, indicating that $\ddt \vq=\vv$.
The momentum $p_{x_1}$ of $x_1$ equals $m_1v_{x_1}$.
The time-derivative $\ddt\vv$ of the velocity $\vv$ is called the acceleration.
The acceleration of $x_1$ is given by $\ddt v_{x_1}=-Gm_1m_2\frac{x_1-x_2}{((x_1-x_2)^2+(y_1-y_2)^2)^{3/2}}$, where $G$, $m_1$, and $m_2$ denote the constant of gravity and masses of two bodies, respectively.
The same process applies for the remaining positions.

The total energy of the two-body problem is given by
\begin{equation}
  H=\frac{1}{2}(m_1(v_{x1}^2+v_{y1}^2)+m_2(v_{x2}^2+v_{y2}^2))-\frac{Gm_1m_2}{\sqrt{(x_1-x_2)^2+(y_1-y_2)^2}}.
\end{equation}
The first and second terms denote the kinetic and potential energies, respectively.
The two-body problem is a Hamiltonian system, and the dynamics mentioned above can be rewritten as Hamilton's equation.
The Hamiltonian $H$ is one of the first integrals; the two-body problem has other first integrals, such as the linear momenta in the $x$- and $y$-directions
\begin{equation}
  p_x  =\frac{m_1v_{x1}+m_2v_{x2}}{m_1+m_2},\
  p_y  =\frac{m_1v_{y1}+m_2v_{y2}}{m_1+m_2},
  \label{eq:linear_momentum}
\end{equation}
and angular momentum~\citep{Hairer2006}.

We set $G$, $m_1$, and $m_2$ to 1.0.
The initial distance $r_1=\sqrt{x_1^2+y_1^2}$ of a mass $m_1$ from the origin was set to $r_1\sim\mathcal U(0.5,1.0)$, and the initial angle $\theta_1=\tan^{-1}(\frac{y_1}{x_1})$ was set to $\theta_1\sim\mathcal U(0,2\pi)$.
The initial speed $|v_1|=\sqrt{v_{x_1}^2+v_{y_1}^2}$ was set to $\frac{1}{2r^2}\epsilon_v$, where $\epsilon_v\sim\mathcal N(1,0.05)$.
The initial angle of the velocity was set to $\theta\pm0.5\pi+\epsilon_\theta\pi$, where $\epsilon_\theta\sim\mathcal N(0,0.05)$.
The initial condition of the other mass $m_2$ was set to the opposite of the mass $m_1$.
Subsequently, the two masses trace elliptical orbits, and when $\epsilon_v=\epsilon_\theta=0$, they trace exactly circular orbits.
In addition, we added a perturbation following $\mathcal N(0,0.01)$ to the velocities of both masses, which corresponds to the center-of-gravity velocity.

We set the time-step size $\Delta t$ to 0.01 and generated 1,000 time-series of $S=500$ steps for training and 10 time-series of $S=10,000$ steps for evaluation.
We trained each model for 100,000 iterations.

\paragraph{Hamiltonian System in Non-Canonical Form: KdV equation}
The KdV equation is a model of shallow water waves and is known to have soliton solutions~\citep{Furihata2001}.
The dynamics is given by
\begin{equation}
  u_t = -\alpha uu_x + \beta u_{xxx},
\end{equation}
where $x$ denotes the spatial position and the subscripts denote partial derivatives; for example, $u_t=\pderiv{u}{t}$.
The Hamiltonian is given by
\begin{equation}
  H(u) = \int -\frac{1}{6}\alpha u^3 -\frac{1}{2} \beta u_x^2\ \dif x.\label{eq:kdv_energy}
\end{equation}
As Hamilton's equation $\ddt u=S\nabla H$, the partial differential operator $\pderiv{}{x}$ acts as the coefficient matrix $S$.
This system is Liouville integrable and has infinitely many first integrals, including the Hamiltonian $H$, total mass $I_1=\int u\dif x$, and $T_2=\int u^2\dif x$~\citep{Miura1968}.
Other first integrals are defined using higher-order partial derivatives.

For PDEs, PINNs are known to provide solutions when symbolic equations and boundary conditions are given~\citep{Raissi2019}.
We, in contrast, consider learning spatially discretized PDEs as ODEs from observed data and solving them using numerical integrators, in the same context as NODEs and HNNs; this topic has also been studied extensively~\citep{Long2018a,Matsubara2020,Sun2020,Holl2020}.
Following the experiments in a previous study~\citep{Matsubara2020}, we discretized the KdV equation in space; it no longer has infinitely many first integrals.
We set $\alpha=-6$, $\beta=1$, spatial size to 10 space units, and space mesh size to 0.2; the system state $\vu$ had 50 elements.
We generated two solitons as the initial condition; each was expressed as $-\frac{12}{\alpha} \kappa^2 \mathrm{sech}^2(\kappa (x - d))$, where the size $\kappa$ followed $\mathcal U(0.5,2)$ and the initial position $d$ of one soliton was set to be at least 2.0 from that of the other.

We set the time-step size $\Delta t$ to 0.001 and generated 1,000 time-series of $S=500$ steps for training and 10 time-series of $S=10,000$ steps for evaluation, using the discrete gradient method to ensure energy conservation~\citep{Furihata2001}.
We trained each model for 30,000 iterations.

Due to the spatial discretization, the KdV dataset contains spatial truncation errors.
When the neural network learns this dataset, no spatial truncation errors are additionally introduced.
An evaluation using the analytical solution as a dataset or datasets created with different spatial resolutions is included in future work.

\paragraph{Poisson System: Double Pendulum}
A double pendulum (2-pend) is depicted in Fig.~\ref{fig:2pend_model}.
In polar coordinates, this is a Hamiltonian system.
The state is composed of the angles $(\theta_1,\theta_2)$ of the two rods and their angular velocities $(\omega_1,\omega_2)$.
This is also a second-order ODE, indicating that $\ddt\theta_1=\omega_1$ and $\ddt\theta_2=\omega_2$.

Let $l_1,l_2$ denote the lengths of the two rods, $m_1,m_2$ denote the masses of the two weights, and $g$ denote the gravitational acceleration.
The acceleration is given by
\begin{equation}
  \begin{aligned}
    \ddt\omega_1 & = \frac{m_2 g \sin \theta_2 \cos \Delta - (l_1 \omega_1^2 \cos \Delta + l_2 \omega_2^2) m_2 \sin \Delta - (m_1 + m_2) g \sin \theta_1}
    {l_1 (m_1 + m_2 \sin^2 \Delta)},                                                                                                                              \\
    \ddt\omega_2 & = \frac{(m_1 + m_2) (l_1 \omega_1^2 \sin \Delta - g \sin \theta_2 + g \sin \theta_1 \cos \Delta) + m_2 l_2 \omega_2^2 \sin \Delta \cos \Delta}
    {l_2 (m_1 + m_2 \sin^2 \Delta)},
  \end{aligned}
\end{equation}
where $\Delta=\theta_1-\theta_2$.
In 2-dimensional Cartesian coordinates, the state is composed of the positions $(x_1,y_1,x_2,y_2)$ of the two masses and the corresponding velocities $(v_{x1},v_{y1},v_{x2},v_{y2})$.
The position is transformed by $x_1=l_1\sin \theta_1$, $y_1=l_1\cos \theta_1$, $x_2=x_1+l_2\sin \theta_2$, and $y_2=y_1+l_2\cos \theta_2$, and the velocity is transformed accordingly.
The total energy $H$ is given by
\begin{equation}
  H=\frac{1}{2}(m_1 (v_{x_1}^2+v_{y_1}^2)+m_2 (v_{x_2}^2+v_{y_2}^2))+g(m_1 y_1+m_2 y_2).
\end{equation}
\begin{wrapfigure}{r}{1.1in}
  \vspace*{-4mm}
  \centering
  \includegraphics[scale=0.5,page=1]{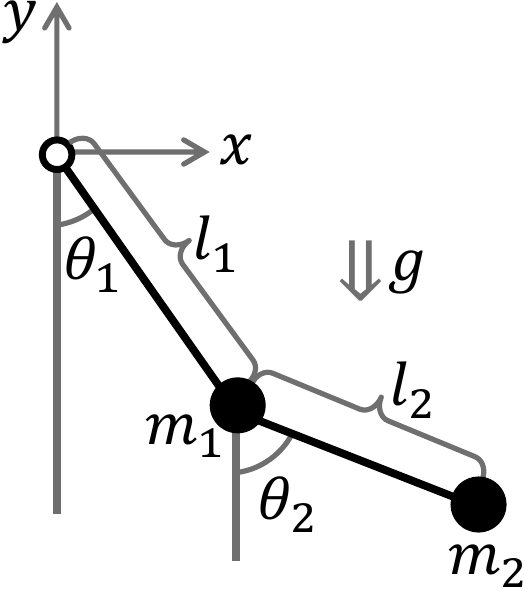}
  \vspace*{-2mm}
  \caption{Diagram of the double pendulum.}\label{fig:2pend_model}
  \vspace*{-5mm}
\end{wrapfigure}
The first and second terms denote the kinetic and potential energies, respectively.
The double pendulum is no longer a Hamiltonian system in Cartesian coordinates.
Because the lengths of the two rods are constant, the double pendulum has two constraints on the position: $l_1^2=x_1^2+y_1^2$ and $l_2^2=(x_2-x_1)^2+(y_2-y_1)^2$.
These constraints are holonomic constraints, and they lead to constraints involving the velocity, namely $0=x_1v_{x_1}+y_1v_{y_1}$ and $0=(x_2-x_1)(v_{x_2}-v_{x_1})+(y_2-y_1)(v_{y_2}-v_{y_1})$.
When the constraints involving the velocity are satisfied, the holonomic constraints are implicitly satisfied.
Therefore, the number of first integrals is five; however, three first integrals are sufficient to determine the dynamics.
The dynamics is degenerate and classified as a constrained Hamiltonian system, or a Poisson system in a more general case.

We set the masses of the two weights to $m_1=m_2=1.0$ and the gravitational acceleration $g$ to $9.8$.
We set the lengths $l_1,l_2$ of the two rods to follow $\mathcal U(0.9,1.1)$, the initial angles $\theta_1,\theta_2$ to follow $\mathcal U(-0.5,0.5)$, and the initial angular velocities $\dot\theta_1,\dot\theta_2$ to follow $\mathcal U(-0.1,0.1)$.

We set the time-step size $\Delta t$ to 0.1 and generated 1,000 time-series of $S=500$ steps for training and 10 time-series of $S=5,000$ steps for evaluation.
We trained each model for 100,000 iterations.

\paragraph{Dirac Structure: FitzHugh--Nagumo Model}
R.~FitzHugh proposed a model of the electrical dynamics of a biological neuron, and J.~Nagumo created an equivalent electric circuit.
This model is called the FitzHugh--Nagumo model~\citep{Izhikevich2006e} and is a modified version of the van der Pol oscillator; the state oscillates when the magnitude of the external current source $I$ is within an appropriate range.
The circuit comprises a resistor $R$, inductor $L$, capacitor $C$, tunnel diode $D$, and voltage source $E$ connected as shown in Fig.~\ref{fig:FitzHughNagumo}.
The whole circuit is connected to an external current source $I$.
Let $I_R$ denote the current through the resistor $R$, and $V_R$ denote the applied voltage.
Ohm's law and other properties of the elements give $V_R=I_R R$, $C\ddt V_C=I_C$, $L\ddt I_L=V_L$, and $I_D=D(V_D)$, where we treat $D$ as a nonlinear function.
Kirchhoff's current law (KCL) obtains $I_C+I_D+I_R=I$ and $I_R=I_L$, and Kirchhoff's voltage law (KVL) obtains $V_C=V_D=V_R+V_L+E$.
We denote $W=I_R$ and $V=V_C$, and set $L=1/0.08$, $R=0.8$, $C=1.0$, $V_E=-0.7$, and $D(V)=V^3/3-V$.
Subsequently, we obtain the FitzHugh--Nagumo model of the original parameters as
\begin{equation}
  \begin{aligned}
    \ddt V & =V-V^3/3-W+I,      \\
    \ddt W & =0.08(V+0.7-0.8W).
  \end{aligned}
\end{equation}
Due to the resistor $R$, the FitzHugh--Nagumo model is not an energy-conserving system.

\begin{wrapfigure}{r}{1.6in}
  \vspace*{-2mm}
  \includegraphics[scale=0.5,page=2]{fig/figs.pdf}
  \caption{Circuit diagram of FitzHugh--Nagumo model~\citep{Izhikevich2006e}.}\label{fig:FitzHughNagumo}
\end{wrapfigure}
Consider a situation where the current through and the voltage applied to stateful elements (capacitors and inductors) are measurable, but the connections between the elements are unknown.
We treated $I_C,I_L,V_C,V_L$ as the system state $\vu$.
Because the state is in 4-dimensional space and the dynamics is intrinsically 2-dimensional, there exist two first integrals; for example, but not limited to, $I=I_C+D(V_C)+I_L$ and $E=V_C-I_LR-V_L$.
This type of electric circuit is an example of a Dirac structure because the state variables are constrained by the circuit topology and Kirchhoff's current and voltage laws~\citep{VanderSchaft2014}.
From the viewpoint of generalized Hamiltonian systems, $(I_L,V_C)$ corresponds to the position, and $(V_L,I_C)$ corresponds to the momentum.
The electric circuit can be described as a port-Hamiltonian system in a non-canonical form.
Because of the non-canonical form, the FitzHugh--Nagumo model is outside the scope of CHNN and dissipative SymODEN~\citep{Finzi2020b,Zhong2020a}.

We set the external current source $I$ to follow $\mathcal U(0.7,1.1)$, set the initial values of $V$ and $W$ to follow $\mathcal U(-1.5,1.5)$ and $\mathcal U(0.0,2.0)$, and transformed them to the state.

We set the time-step size $\Delta t$ to 0.1 and generated 1,000 time-series of $S=500$ steps for training and 10 time-series of $S=2,000$ steps for evaluation.
We trained each model for 30,000 iterations.


\section{Additional Results and Discussion}\label{appendix:discussion}
\subsection{Demonstration of First Integral Preservation}\label{appendix:mass_spring_dopri}
\begin{wrapfigure}{r}{1.7in}
  \vspace*{-15mm}
  \includegraphics[scale=0.9]{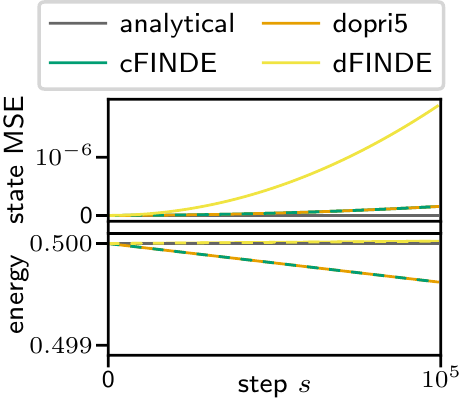}
  \vspace*{-2mm}
  \caption{Integration of a known mass-spring system by Dormand--Prince integrator.
    (top) Mean squared errors in states predicted by comparison methods.
    (bottom) Energy calculated from the states predicted.
  }\label{fig:mass_spring_dopri}
  \vspace*{-3mm}
\end{wrapfigure}
In Fig.~\ref{fig:mass_spring}, we examined a mass-spring system and FINDE using the leapfrog integrator.
We also examined the case with the Dormand--Prince integrator (dopri5), as shown in Fig.~\ref{fig:mass_spring_dopri}.
We increased the number of steps to $10^5$, and displayed the MSEs of the state instead of the state itself.
First, we focus on the energy.
Even using the Dormand--Prince integrator, a fourth-order method, the energy is slightly decreased.
The cFINDE with the Dormand--Prince integrator shows the same tendency.
This phenomenon is due to temporal discretization errors and is called energy drift.
The dFINDE with the Dormand--Prince integrator significantly suppresses the error in energy.
The remaining error is caused by rounding errors.

When the focus is on the MSEs of the state, the trend is different: the dFINDE with the Dormand--Prince integrator suffers from the most significant errors in state.
Although the dFINDE is designed to eliminate temporal discretization errors in energy, it does not necessarily reduce those in state.
In contrast, the Dormand--Prince integrator is designed to suppress temporal discretization errors in state.

Therefore, there is no guarantee that the dFINDE improves the prediction performance when defined using errors in state.
Conversely, the experimental results in Table~\ref{tab:performance} demonstrate that the dFINDE is superior to the base model and cFINDE in VPT.
This is because dFINDE reduces the modeling errors rather.
For the mass-spring system, the governing equation is already known as an ODE and is discretized by the dFINDE, leading to temporal discretization errors.
However, when dFINDE learns dynamics from data, the training data points are already sampled in discrete time, and the dFINDE predicts future states in discrete time.
Therefore, no temporal discretization error occurs, and we obtain only the advantages of exactly preserving the first integral.

This type of paradox has been repeatedly discovered in previous studies.
For example, the leapfrog integrator and discrete gradient method are second-order methods.
However, they are superior to the Dormand--Prince integrator when combined with neural networks and learning dynamics from data~\citep{Matsubara2020}.
For better learning (i.e., smaller modeling errors), the preservation of specific properties of target systems is more important than the order of accuracy.

\subsection{Symbolic Regression of Found First Integrals}\label{appendix:symbolic_regression}
Using gplearn (based on genetic programming), we performed a symbolic regression of the first integrals $\vV$ found by the neural network.
We prepared addition, subtraction, multiplication, and division as candidate operations, used Pearson's correlation coefficient as the evaluation criterion, set the early stopping threshold to 0.9, and set the population size to 10,000.
We set the other hyperparameters to their default values, e.g., the maximum number of generations was 20.

We summarize the regression results of the HNN with cFINDE for $K=2$ trained using the two-body dataset in Table~\ref{tab:symbolic}.
Note that Pearson's correlation coefficient is invariant to biases and scale factors.
FINDE is also invariant because it only uses the directions of the gradients of first integrals.
Hence, we removed biases and scale factors from the regression results.
When the focus is on the symbolic regression of the training data, $V_1$, $V_1$, $V_2$, and $V_2$ for trials 0, 1, 2, and 3 are identical to the linear momentum in the $x$-direction up to scale factors; recall that we set $m_1=m_2=1.0$ and see Eq.~\eqref{eq:linear_momentum}.
$V_2$, $V_2$, $V_1$, and $V_1$ for trials 0, 1, 2, and 3 are also identical to the linear momentum in the $y$-direction.
$V_1$ and $V_2$ for trial 4 are weighted sums of the linear momenta in the $x$- and $y$-directions; in particular, they can be regarded as the linear momenta in the $(1,-1)$- and $(1,1)$-directions, respectively.

When the quantities $V_1(\vu)$ and $V_2(\vu)$ are first integrals, any function of only $V_1(\vu)$, $V_2(\vu)$, and arbitrary constants is a first integral functionally dependent on $V_1(\vu)$ and $V_2(\vu)$.
Thus, it is in principle impossible to re-discover a first integral as a well-known symbolic expression, and a failure in symbolic regression is not a problem in any way.
Previous studies introduced certain constraints (such as ``gauge fixing'') for symbolic regression~\citep{Liu2021b}; a combination of such method may improve the results.
However, recent studies on neural networks have revealed that typical initialization and training procedures tend to learn simple functions~\citep{Barrett2021,Cao2021a}.
Additionally, the symbolic regression limited the depth of the computation graph, biasing the results toward simple functions; hence, the found first integrals were identical to the well-known forms and were separated in the $x$- and $y$-directions in most cases.

The same is true for the symbolic regression of the test data, except for $V_1$ for trial 0, which had a small perturbation $\alpha$.
Because of the limited extrapolation ability, neural networks cannot always accurately represent functions outside the training data range.
Once first integrals are found by FINDE and identified as equations by symbolic regression, one can use the equations instead of neural networks, ensuring the preservation of first integrals in the entire domain.
From these results, we can conclude that cFINDE identified the linear momenta.

The state of the KdV dataset has 50 elements, which is too large to apply a symbolic regression.
For the 2-pend and FitzHugh--Nagumo datasets, we did not find consistent equations of first integrals.
For example, the symbolic regression identified a quantity $x_1^2 - y_1$ as a first integral in the 2-pend dataset, which is not directly related to well-known first integrals.
When the angle $\theta_1$ of the upper rod is small, $y_1$ takes a value close to $-1$, and the quantity $x_1^2 - y_1$ is close to $x_1^2+y_1^2$, which is a well-known first integral, namely the square $l_1^2$ of the upper rod length $l_1$.
It is difficult to determine whether this inaccuracy is because of the training of FINDE or symbolic regression.
There may still be room for improvement in the training of FINDE or symbolic regression.

\begin{table}[t]
  \fontsize{8pt}{9pt}\selectfont
  \caption{Symbolic Regression of First Integrals Found in Two-Body Problem}
  \label{tab:symbolic}
  \centering
  \begin{tabular}{ccccc}
    \toprule
                   & \multicolumn{2}{c}{\textbf{Training Data}} & \multicolumn{2}{c}{\textbf{Test Data}}                                                                                        \\
    \cmidrule(lr){2-3}\cmidrule(lr){4-5}
    \textbf{Trial} & \textbf{$V_1$}                             & \textbf{$V_2$}                            & \textbf{$V_1$}                        & \textbf{$V_2$}                            \\
    \midrule
    0              & $v_{x1}\!+\!v_{x2}$                        & $v_{y1}\!+\!v_{y2}$                       & $v_{x1}\!+\!v_{x2}\!+\!\alpha$        & $v_{y1}\!+\!v_{y2}$                       \\
    1              & $v_{x1}\!+\!v_{x2}$                        & $v_{y1}\!+\!v_{y2}$                       & $v_{x1}\!+\!v_{x2}$                   & $v_{y1}\!+\!v_{y2}$                       \\
    2              & $v_{y1}\!+\!v_{y2}$                        & $v_{x1}\!+\!v_{x2}$                       & $v_{y1}\!+\!v_{y2}$                   & $v_{x1}\!+\!v_{x2}$                       \\
    3              & $v_{y1}\!+\!v_{y2}$                        & $v_{x1}\!+\!v_{x2}$                       & $v_{y1}\!+\!v_{y2}$                   & $v_{x1}\!+\!v_{x2}$                       \\
    4              & $v_{x1}\!+\!v_{x2} - v_{y1} - v_{y2}$      & $v_{x1}\!+\!v_{x2}\!+\!v_{y1}\!+\!v_{y2}$ & $v_{x1}\!+\!v_{x2} - v_{y1} - v_{y2}$ & $v_{x1}\!+\!v_{x2}\!+\!v_{y1}\!+\!v_{y2}$ \\
    \bottomrule
  \end{tabular}\\
  \raggedright
  We removed biases and scale factors. $\alpha=0.003(y_1 + y_2)(v_{x2} + x_1 + y_1(v_{x2} + y_1 + y_2) + 1.402)$.
\end{table}

\subsection{Comparison with Model of Known Holonomic Constraints}\label{appendix:holonomic_constraints}
The double pendulum (2-pend) is classified as a constrained Hamiltonian system.
CHNN was proposed for cases when holonomic constraints are known~\citep{Finzi2020b}.
We evaluated comparison methods under the assumption that the holonomic constraints were known.
We summarized the results in Table~\ref{tab:holonomic}.
The HNN, without constraints, completely failed to learn the dynamics.
This is unsurprising because the dynamics of the double pendulum is outside the scope of the HNN.
The two known holonomic constraints lead to two constraints involving the velocity; the CHNN took into account all four known constraints and worked remarkably.
The HNN with cFINDE was given all four known constraints as the first integrals, but did not work properly.
The original purpose of projection methods is to eliminate temporal discretization errors of first integrals but not to change the class to which the dynamics belong.
Therefore, when a target system is not a subject of the base model, the base model with FINDE does not work.
The NODE learns an ODE in a general way, and thus constrained Hamiltonian systems are included in its subjects.
Given all four known constraints, the NODE with cFINDE worked better but never surpassed the CHNN.

However, the CHNN works only for Hamiltonian systems in the canonical form with holonomic constraints.
We also evaluated comparison methods using the 2-body dataset under the assumption that the linear momenta were known as first integrals.
The CHNN attempted to obtain the inverse of a singular matrix and could not learn the dynamics.
In contrast, the cFINDE improved the performances of both NODE and HNN.

Existing methods (e.g., HNN and CHNN) assume geometric structures (e.g., Hamiltonian structure) described in Appendix \ref{appendix:system} in order to guarantee conservation laws.
When multiple structures are assumed at the same time, they must be integrated using appropriate prior knowledge.
If it is possible, it would achieve extremely high performance.
Otherwise, the geometric structures would conflict with each other and would not produce an appropriate model.
This is the reason why CHNN failed to learn the 2-body dataset and HNN+FINDE failed to learn the 2-pend dataset.
In contrast, NODE+FINDE does not assume any geometric structure and assumes first integrals in the most general way, being available to any situation.
Hence, FINDE can assume one or more first integrals without changing anything.

When the detailed properties of target systems are known, one can choose the best models.
If the chosen model is inappropriate, the training procedure totally fails.
FINDE provides a better alternative when prior knowledge is limited.
Moreover, a constrained Hamiltonian system can have first integrals other than holonomic constraints and the Hamiltonian.
In this case, the CHNN with FINDE is potentially the best choice.

\begin{table}[t]
  \fontsize{8pt}{9pt}\selectfont
  \centering
  \caption{Results with Known Holonomic Constraints.}
  \label{tab:holonomic}
  \begin{tabular}{lrlrl}
    \toprule
                              & \multicolumn{2}{c}{\textbf{2-pend}}             & \multicolumn{2}{c}{\textbf{2-body}}                                                                                                       \\
    \cmidrule(lr){2-3}\cmidrule(lr){4-5}
    Model                     & \multicolumn{1}{c}{\textbf{1-step}$\downarrow$} & \multicolumn{1}{c}{\textbf{VPT}$\uparrow$} & \multicolumn{1}{c}{\textbf{1-step}$\downarrow$} & \multicolumn{1}{c}{\textbf{VPT}$\uparrow$} \\
    \midrule
    NODE                      & 0.82\std{0.02\zz}                               & 0.110\std{0.035}                           & 144.21\std{12.65}                               & 0.134\std{0.014}                           \\
    HNN~\citep{Greydanus2019} & 6220.26\std{91.57}                              & 0.002\std{0.000}                           & 5.17\std{0.57\zz}                               & 0.362\std{0.026}                           \\
    CHNN~\citep{Finzi2020b}   & 0.07\std{0.00\zz}                               & 0.928\std{0.036}                           & \multicolumn{2}{c}{(not working)}                                                            \\
    \midrule
    NODE+cFINDE               & 0.71\std{0.04\zz}                               & 0.461\std{0.071}                           & 163.64\std{9.79\zz}                             & 0.147\std{0.024}                           \\
    HNN+cFINDE                & 236.51\std{7.15\zz}                             & 0.020\std{0.002}                           & 8.32\std{0.43\zz}                               & 0.476\std{0.040}                           \\
    \bottomrule
  \end{tabular}
\end{table}

\subsection{Reason for High Performance and How to Determine Number of First Integrals}\label{appendix:how_to_k}
The theoretical explanation for the high performance of neural networks (e.g., HNN) that assume first integrals for physical phenomena is an open question.
\cite{Sannai2021} has theoretically shown that neural networks (e.g., CNNs and GNNs) with symmetry have faster learning convergence, and we consider this approach can be applied to the above question.
At least for cFINDE and dFinde, we have an intuitive but not rigorous explanation; assuming one more first integral (i.e., increasing $K$ by 1) reduces the number of degrees of freedom in the dynamics by 1, narrows the hypothesis space, accelerates learning convergence, and suppresses generalization errors.

As shown in Table~\ref{tab:performance}, the performance of cFINDE and dFINDE is sensitive to the assumed number $K$ of first integrals.
Because $K$ is a hyperparameter, it is basically a subject to be adjusted through evaluations on a validation set.
With inappropriately large $K$, both cFINDE and dFINDE dropped their performance significantly.
See the results of the 2-pend and FitzHugh--Nagumo datasets for $K=6$ and $K=3$, respectively.

However, the performance drop can be found even with the training set.
Table~\ref{tab:performance_train} summarizes the prediction performance on the training set of the 2-pend dataset.
As was the case with the test set, the performance significantly dropped at $K=6$.
This is because NODE with cFINDE for $K=6$ assumes the submanifold $\M'$ to be 2-dimensional.
The submanifold $\M'$ is in fact 3-dimensional, so NODE with cFINDE for $K=6$ is incapable of learning the dynamics and performs poorly even on the training set.
Hence, the training set is enough to avoid a fatally inappropriate $K$.

Alternatively, $K$ can be determined by using other methods (e.g., \citet{Fukunaga1971,Liu2021b}).
Although these methods have some drawbacks introduced in Appendix~\ref{appendix:system}, they may be complementary to FINDE.

\begin{table}[t]
  \centering
  \fontsize{8pt}{9pt}\selectfont
  \captionof{table}{Results of NODE with cFINDE on Training Set of 2-Pend Dataset.}
  \label{tab:performance_train}
  \begin{tabular}{lcrl}
    \toprule
             &     & \multicolumn{2}{c}{\textbf{2-pend}}
    \\
    \cmidrule(lr){3-4}
    Model    & $K$ & \multicolumn{1}{c}{\textbf{1-step}$\downarrow$} & \multicolumn{1}{c}{\textbf{VPT}$\uparrow$} \\
    \midrule
    NODE     & --  & 0.76\std{0.02}                                  & 0.966\std{0.007}                           \\
    \midrule
             & 1   & 0.72\std{0.06}                                  & 0.974\std{0.004}                           \\
             & 2   & 0.69\std{0.08}                                  & 0.981\std{0.014}                           \\
    + cFINDE & 3   & 0.63\std{0.02}                                  & 0.994\std{0.002}                           \\
             & 4   & 0.67\std{0.05}                                  & 0.990\std{0.005}                           \\
             & 5   & 0.65\std{0.02}                                  & 0.998\std{0.000}                           \\
             & 6   & 9.93\std{0.00}                                  & 0.126\std{0.000}                           \\
    \bottomrule
  \end{tabular}
\end{table}

\subsection{Comparison with Modified Neural Projection Method}\label{appendix:neural_projection}
The neural projection method (NPM) also employs a projection method~\citep{Yang2020a}.
Using a manner similar to Newton's method, it enforces the constraint $C(\vu)=0$ by the projection of the state $\vu$ under the assumption that the quantity $C(\vu)$ is always zero.
This assumption holds for some cases (e.g., holonomic constraints in a fixed environment), but not for most first integrals, whose values depend on initial conditions.

For example, the linear momentum in the $x$-direction of the two-body problem is the first integral expressed as $V(\vu)=m_1 v_{x1}(t)+m_2 v_{x2}(t)$.
This quantity $V$ is constant within a trial (i.e., $V(\vu(t))=V(\vu(0))$) and varies between trials depending on the initial speed $ v_{x1}(0)$ and $ v_{x2}(0)$.
The total energy, the total mass, and many other first integrals depend on the initial condition in the same manner; hence, they are outside the scope of the NPM.
In contrast, by imposing the constraint on the gradient $\nabla V=0$ or discrete gradient $\overline\nabla V=0$, our proposed FINDE keeps the quantity $V$ constant and can handle any first integrals.

For comparison, we replaced the constraint $C(\vu)=0$ with $C(\vu^{s+1},\vu^s)=V(\vu^{s+1})-V(\vu^s)=0$ and adopted the NPM to first integrals varying from trial to trial.
We evaluated the modified NPM using the 2-pend dataset.
Because the modified NPM is a discrete-time projection method, we compared it with the discrete-time version of the proposed FINDE (dFINDE).
The results are summarized in Table~\ref{tab:npm}.

\begin{table}[t]
  \fontsize{8pt}{9pt}\selectfont
  \caption{Comparison with Neural Projection Method (NPM)}
  \label{tab:npm}
  \centering
  \begin{tabular}{lccccc}
    \toprule
    \textbf{K} & \multicolumn{2}{c}{\textbf{dFINDE (proposed)}}  & \multicolumn{3}{c}{\textbf{modified NPM}}                                                                                                                       \\
    \cmidrule(lr){2-3}\cmidrule{4-6}
               & \multicolumn{1}{c}{\textbf{1-step}$\downarrow$} & \multicolumn{1}{c}{\textbf{VPT}$\uparrow$} & \multicolumn{1}{c}{\textbf{1-step}$\downarrow$} & \multicolumn{1}{c}{\textbf{VPT}$\uparrow$} & \textbf{successful} \\
    \midrule
    1          & 0.75\std{0.10}                                  & 0.152\std{0.017}                           & 0.73\std{0.08}                                  & 0.150\std{0.014}                           & 5/5                 \\
    2          & 0.74\std{0.05}                                  & 0.271\std{0.111}                           & ---                                             & ---                                        & 0/5                 \\
    3          & 0.69\std{0.05}                                  & 0.447\std{0.081}                           & (0.69\std{0.00})                                & (0.138\std{0.000})                         & 1/5                 \\
    4          & 0.71\std{0.03}                                  & 0.454\std{0.060}                           & (0.72\std{0.03})                                & (0.383\std{0.023})                         & 3/5                 \\
    5          & 0.86\std{0.09}                                  & 0.591\std{0.087}                           & 0.85\std{0.11}                                  & 0.364\std{0.134}                           & 5/5                 \\
    6          & 58.88\std{22.98}                                & 0.037\std{0.039}                           & (1.29\std{0.20})                                & (0.103\std{0.016})                         & 3/5                 \\
    \bottomrule
  \end{tabular}
\end{table}

The dFINDE successfully learned the dynamics in all trials, but the modified NPM failed to learn the dynamics in half the trials (see the rightmost column for the numbers of successful trials out of 5).
The modified NPM often encountered of the underflow of the time-step size or a division by the zero gradient of the first integral.
Even when the learning was successful, the performance of the NPM was inferior to that of the dFINDE.
The modified NPM solved the optimization problem in Eq.~\eqref{eq:projection_method} at every step, but it sometimes diverged or failed to converge, especially in the early phase of learning.
The NPM was successful for fixed environments but might be unsuited for general first integrals varying from trial to trial.
However, the dFINDE does not require solving an optimization problem during training, making the learning process robust against randomness such as initialization.

\end{document}